\newcommand{\argmax}{\operatornamewithlimits{arg \, max}}
\newtheorem{theorem}{Theorem}
\newtheorem{lemma}{Lemma}
\newtheorem{proof}{Proof}
\icmltitlerunning{Leveraged Weighted Loss for Partial Label Learning}
\begin{document}
	
	\twocolumn[
	\icmltitle{Leveraged Weighted Loss for Partial Label Learning}
	
	
	
	\icmlsetsymbol{equal}{*}
	
	\begin{icmlauthorlist}
		\textcolor{white}{\icmlauthor{}{pku}}
		\icmlauthor{Hongwei Wen}{ut,equal}
		\icmlauthor{Jingyi Cui}{pku,equal}
		\icmlauthor{Hanyuan Hang}{ut}
		\icmlauthor{Jiabin Liu}{samsung}
		\icmlauthor{Yisen Wang}{pku}
		\icmlauthor{Zhouchen Lin}{pku,pazhou}
	\end{icmlauthorlist}
	
	\icmlaffiliation{ut}{Department of Applied Mathematics, University of Twente, The Netherlands}
	\icmlaffiliation{pku}{Key Lab. of Machine Perception (MoE), School of EECS, Peking University, China}
	\icmlaffiliation{pazhou}{Pazhou Lab, Guangzhou, China}
	\icmlaffiliation{samsung}{Samsung Research China-Beijing, Beijing, China}
	\icmlcorrespondingauthor{Yisen Wang}{yisen.wang@pku.edu.cn}
	\icmlcorrespondingauthor{Jiabin Liu}{Jiabin.liu@samsung.com}
	
	
	\icmlkeywords{Weakly Supervised Learning, Machine Learning, ICML}
	
	\vskip 0.3in
	]
	
	
	
	\printAffiliationsAndNotice{\icmlEqualContribution} 
	
\begin{abstract}
	As an important branch of weakly supervised learning, partial label learning deals with data where each instance is assigned with a set of candidate labels, whereas only one of them is true. 
	Despite many methodology studies on learning from partial labels, there still lacks theoretical understandings of their risk consistent properties under relatively weak assumptions, especially on the link between theoretical results and the empirical choice of parameters. 
	In this paper, we propose a family of loss functions named \textit{Leveraged Weighted} (LW) loss, which for the first time introduces the leverage parameter $\beta$ to consider the trade-off between losses on partial labels and non-partial ones. 
	From the theoretical side, we derive a generalized result of risk consistency for the LW loss in learning from partial labels, based on which we provide guidance to the choice of the leverage parameter $\beta$.
	In experiments, we verify the theoretical guidance, and show the high effectiveness of our proposed LW loss on both benchmark and real datasets compared with other state-of-the-art partial label learning algorithms.
\end{abstract}

\section{Introduction}

Partial label learning \citep{cour2011learning}, also called ambiguously label learning \citep{chen2017learning} and superset label problem \citep{gong2017regularization}, refers to the task where each training example is associated with a set of candidate labels, while only one is assumed to be true. 
It naturally arises in a number of real-world scenarios such as web mining \citep{luo2010learning}, multimedia contents analysis \citep{cour2009learning,zeng2013learning}, ecoinformatics \citep{liu2012conditional}, etc, and subsequently attracts a lot of attention on methodology studies \citep{feng2020provably,zhang2020semi,yao2020network,lyu2019gm,wang2019adaptive}.

As the main target of partial learning lies in disambiguating the candidate labels, two general strategies have been proposed with different assumptions to the latent label space: \textit{1)} Average-based strategy that treats each candidate label equally in the model training phase \citep{hullermeier2006learning, cour2011learning, zhang2015solving}. \textit{2)} Identification-based strategy that considers the ground-truth label as a latent variable, and assume certain parametric model to describe the scores of each candidate label \citep{feng2019partial,yan2020partial,yao2020network}. The former is intuitive but has an obvious drawback that the predictions can be severely distracted by the false positive labels. The latter one attracted lots of attentions in the past decades but is criticized for the vulnerability when encountering differentiated label in candidate label sets. Furthermore, in recent years, more and more literature focuses on making amendments and adjustments on the optimization terms and loss functions on the basis of identification-based model \citep{lv2020progressive, cabannes2020structured, wu2018towards, lyu2019gm, feng2020provably}. 

Despite extensive studies on partial label learning algorithms, theoretically guaranteed ones remain to be the minority. 
Some researchers have studied the statistical consistency \citep{cour2011learning, feng2020provably, cabannes2020structured} and the learnability \citep{liu2014learnability} of partial label learning algorithms. 
However, these theoretical studies are often based on rather strict assumptions, e.g. convexity of loss function \citep{cour2011learning}, uniformly sampled partial label sets \citep{feng2020provably}, etc.
Moreover, it remains to be an open problem why an algorithm performs better than others under specific parameter settings, or in other words, how can theoretical results guide parameter selections in computational implementations.

In this paper, we aim at investigating further theoretical explanations for partial label learning algorithms. 
Applying the basic structure of identification-based methods, we propose a family of loss functions named \textit{Leveraged Weighted} (LW) loss. 
From the perspective of risk consistency, we provide theoretical guidance to the choice of the leverage parameter in our proposed LW loss by discussing the supervised loss to which LW is risk consistent. 
Then we design the partial label learning algorithm by iteratively identifying the weighting parameters.
As follows are our contributions:
\begin{itemize}
\item We propose a family of loss function for partial label learning, named the Leveraged Weighted (LW) loss function, where we for the first time introduce the leverage parameter $\beta$ that considers the trade-offs between losses on partial labels and non-partial labels.

\item We for the first time generalize the uniform assumption on the generation procedure of partial label sets, 
under which we prove the risk consistency of the LW loss. 
{We also prove the Bayes consistency of our LW loss.}
Through discussions on the supervised loss to which LW is risk consistent, we obtain the potentially effective values of $\beta$.

\item We present empirical understandings to verify the theoretical guidance to the choice of $\beta$,
and experimentally demonstrate the effectiveness of our proposed algorithm based on the LW loss over other state-of-the-art partial label learning methods on both benchmark and real datasets. 
\end{itemize}

\section{Related Works}

We briefly review the literature for partial label learning.

\textbf{Average-based methods.}
The average-based methods normally consider each candidate label as equally important during model training, and average the outputs of all the candidate labels for predictions. 
Some researchers apply nearest neighbor estimators and predict a new instance by voting \citep{hullermeier2006learning,zhang2015solving}. Others further take advantage of the information in non-candidate samples. For example, \cite{cour2011learning,zhang2016partial} employ parametric models to demonstrate the functional relationship between features and the ground truth label. The parameters are trained to maximize the average scores of candidate labels minus the average scores of non-candidate labels.

\textbf{Identification-based methods.}
The identification-based methods aim at directly maximizing the output of exactly one candidate label, chosen as the truth label. A wealth of literature adopt major machine learning techniques such as maximum likelihood criterion \cite{Jin2002multiple, liu2012conditional} and maximum margin criterion \cite{nguyen2008classification, yu2016maximum}.
As deep neural networks (DNNs) become popular, DNN-based methods outburst recently. \cite{feng2019partial} introduces self-learning with network structure; \cite{yan2020partial} studies the utilization of batch label correction; \cite{yao2020network} manages to improve the performance by combining different networks. Moreover, it is worth highlighting that these algorithms have shown their weaknesses when facing the false positive labels that co-occur with the ground truth label.

\textbf{Binary loss-based multi-class classification.}
	Building multi-class classification loss from multiple binary ones is a general and frequently used scheme. 
	In previous works, to extend margin-based binary classifiers (e.g., SVM and AdaBoost) to the multi-class setting, they adopted the combination of binary classification losses using constraint comparison \citep{lee2004multicategory, zhang2004statistical}, loss-based decoding \citep{allwein2000reducing}, etc.
	In this paper, inspired by these losses for multi-class classification, we design a loss function for multi-class partial label learning via multiple binary loss functions.

In this paper, we follow the idea of the identification-based method, propose the LW loss function, and provide theoretical results on risk consistency.
This result gives theoretical insights into the problem why an algorithm shows better performance under certain parameter settings than others.

\section{Methodology}\label{sec::methodology}

In this section, we first introduce some background knowledge about learning with partial labels in Section \ref{sec::prelims}. 
Then in Section \ref{sec::LWloss} we propose a family of LW loss function for partial labels.
In Section \ref{sec::thm_interpret}, we prove the risk consistency of the LW loss and present guidance to the empirical choice of the leverage parameter $\beta$.
Finally, we present our proposed practical algorithm in Section \ref{sec::main_alg}.

\subsection{Preliminaries}\label{sec::prelims}

\textbf{Notations.}
Denote $\mathcal{X} \subset \mathbb{R}^d$ as a non-empty feature space (input space), $\mathcal{Y} = [K] := \{1, \ldots, K\}$ as the \textit{supervised label space}, where $k$ is the number of classes, and $\vec{\mathcal{Y}} := \{\vec{y} \, |\, \vec{y} \subset \mathcal{Y} \} = 2^{[K]}$ as the \textit{partial label space}, where $2^{[K]}$ is the collection of all subsets in $[K]$.
For the rest of this paper, $y$ denotes the true label of  $x$ unless otherwise specified.

\textbf{Basic settings.}
In learning with partial labels, an input variable $X \in \mathcal{X}$ is associated with a set of potential labels $\vec{Y} \in \vec{\mathcal{Y}}$ instead of a unique true label $Y \in \mathcal{Y}$.
The goal is to find the latent \textit{ground-truth label} $Y$ for the input $X$ through observing the \textit{partial label set} $\vec{Y}$. 
The basic definition for partially supervised learning lies in the fact that the true label $Y$ of an instance $X$ must always reside in the partial label set $\vec{Y}$, i.e.
\begin{align}
	\mathrm{P}(y \in \vec{Y} \,|\, Y=y, x) = 1.
\end{align}
That is, we have $\#|\vec{Y}| \geq 1$, and $\#|\vec{Y}| = 1$ holds if and only if $\vec{Y} = \{y\}$, in which case the partial label learning problem reduces to multi-class classification with supervised labels.


\textbf{Risk consistency.}
\textit{Risk consistency} is an important tool in studying weakly supervised algorithms \citep{ishida2017learning, ishida2019complementary, feng2020learning, feng2020provably}. 
We say a method is \textit{risk-consistent} if its corresponding classification \textit{risk}, also called \textit{generalization error}, is equivalent to the supervised classification risk $\mathcal{R}(f)$ given the same classifier $f$. 
Note that risk consistency implies \textit{classifier consistency} \citep{xia2019anchor}, where learning from partial labels results in the same optimal classifier as that when learning from the fully supervised data.

To be specific, denote $g(x) = (g_1(x), \ldots, g_K(x))$ as the score function learned by an algorithm, where $g_z(x)$ is the score function for label $z \in [K]$. Larger $g_z(x)$ implies that $x$ is more likely to come from class $z \in [K]$. 
Then the resulting classifier is $f(x) = \argmax_{z \in [K]} g_z(x)$.
By definition, we denote 
\begin{align}\label{eq::supervisedrisk}
	\mathcal{R}(\mathcal{L}, g) := \mathbb{E}_{(X,Y)}[\mathcal{L}(Y,g(X))],
\end{align}
as the \textit{supervised risk} w.r.t. \textit{supervised loss function} $\mathcal{L} : \mathcal{Y} \times \mathbb{R}^K \to \mathbb{R}^+$ for supervised classification learning. 
On the other hand, we denote
\begin{align}
\bar{\mathcal{R}}(\bar{\mathcal{L}}, g) 
:= \mathbb{E}_{(X,\vec{Y})}[\bar{\mathcal{L}}(\vec{Y},g)]
\end{align}
as the \textit{partial risk} w.r.t. \textit{partial loss function} $\bar{\mathcal{L}} : \vec{\mathcal{Y}} \times \mathbb{R}^K \to \mathbb{R}^+$,
measuring the expected loss of $g$ learned through partial labels w.r.t. the joint distribution of $(X, \vec{Y})$.
Then a partial loss $\bar{\mathcal{L}}$ is \textit{risk-consistent} to the supervised loss $\mathcal{L}$ if $\bar{\mathcal{R}}(\bar{\mathcal{L}}, g) = \mathcal{R}(\mathcal{L}, g)$.

{
\textbf{Bayes consistency.}
We denote $g_L^*:=\sup_{g \in \mathcal{M}} \mathcal{R}(\mathcal{L}, g)$ as the Bayes decision function w.r.t. the loss function $\mathcal{L}$, where $\mathcal{M}$ contains all measurable functions and $\mathcal{R}_{\mathcal{L}}^* :=\mathcal{R}(\mathcal{L}, g^*)$. 
Similarly, we denote $\mathcal{R}^* := \mathcal{R}_{\mathcal{L}_{0\text{-}1}}^*$ as the Bayes decision function w.r.t. the multi-class $0$-$1$ loss, i.e. 
\begin{align}\label{eq::01loss}
\mathcal{L}_{0\text{-}1}(y, g(x)) := \mathbf{1}\{\argmax_{k\in[K]} g_k(x) \neq y\},
\end{align}
where $\mathbf{1} \{\cdot\}$ denotes the indicator function.
Then if there exist a collection $\{g_n\}$ such that $\mathcal{R}(\mathcal{L}_{0\text{-}1}, g_n) \to \mathcal{R}^*$ as $n\to\infty$, we say that the surrogate loss $\mathcal{L}$ reaches \textit{Bayes risk consistency}.
}

\subsection{Leveraged Weighted (LW) Loss Function}\label{sec::LWloss}

In this paper, we propose a family of loss function for partial label learning named \textit{Leveraged Weighted} (LW) loss function.
We adopt a multiclass scheme frequently used for the fully supervised setting \citep{crammer2001algorithmic, rifkin2004defense, zhang2004statistical, tewari2005consistency}, 
which combines binary losses $\psi(\cdot): \mathbb{R} \to \mathbb{R}^+$, a non-increasing function, to create a multiclass loss.
We highlight that it is the first time that the leverage parameter $\beta$ is introduced into loss functions for partial label learning, which leverages between losses on partial labels and non-partial ones.
To be specific, the partial loss function of concern is of the form 
\begin{align}\label{eq::partial_loss}
	\bar{\mathcal{L}}_{\psi} (\vec{y},g(x)) = \sum_{z\in\vec{y}} w_z \psi(g_z(x)) + \beta \cdot \sum_{z\notin\vec{y}} w_z \psi(-g_z(x)),
\end{align} 
where $\vec{y} \in \vec{\mathcal{Y}}$ denotes the partial label set.
It consists of three components.
\begin{itemize}
	\item A binary loss function $\psi(\cdot) : \mathbb{R} \to \mathbb{R}^+$, where $\psi(g_z(x))$ forces $g_z$ to be larger when $z$ resides in the partial label set $\vec{y}$, while $\psi(-g_z(x))$ punishes large $g_z$ when $z \notin \vec{y}$.
	\item Weighting parameters $w_z \geq 0$ on $\psi(g_z)$ for $z \in [K]$. Generally speaking, we would like to assign more weights to the loss of labels that are more likely to be the true label.
	\item The leverage parameter $\beta \geq 0$ that distinguishes between partial labels and non-partial ones. Larger $\beta$ quickly rules out non-partial labels during training, while it also lessens weights assigned to partial labels. 
\end{itemize}

We mention that the partial loss proposed in \eqref{eq::partial_loss} is a general form.
Some special cases include

	\textit{1)} Taking $\beta = 0$, $w_z = 1/\#|\vec{y}|$ for $z \in \vec{y}$, we achieve the partial loss proposed by \cite{Jin2002multiple}, the form of which is
	\begin{align}
		\frac{1}{\#|\vec{y}|}\sum_{y\in\vec{y}}\psi(g_y(x)).
	\end{align}

	\textit{2)} Taking $\beta = 0$, and $w_{z^*} = 1$ where $z^* = \argmax_{z \in \vec{y}} g_z$, $w_z = 0$ for $z \in \vec{y} \setminus \{z^*\}$, we achieve the partial loss function proposed by 
	\cite{lv2020progressive}, 
	with the form
	\begin{align}\label{eq::softmax}
		\psi(\max_{y\in\vec{y}}g_y(x)) \Leftrightarrow \min_{y\in\vec{y}} \psi(g_y(x)).
	\end{align}

	\textit{3)} By taking $\beta = 1$, and $w_{z^*} = 1$ where $z^* = \argmax_{z \in \vec{y}} g_z$, $w_z = 0$ for $z \in \vec{y} \setminus \{z^*\}$, $w_z = 1$ for $z \notin \vec{y}$, 
	we achieve the partial loss function proposed by \cite{cour2011learning}, with the form
	\begin{align}
		\psi(\max_{y\in\vec{y}}g_y(x))+\sum_{y\notin\vec{y}}\psi(-g_y(x)).
	\end{align}

\subsection{Theoretical Interpretations} \label{sec::thm_interpret}

In this part, we first relax the assumption on the generation procedure of the partial label set, and show the risk consistency of our proposed LW loss function. 
Then by observing the supervised loss to which LW is risk consistent, we study the leverage parameter $\beta$ and deduce its reasonable values.
All proofs are shown in Section \ref{sec::Aproofs} of the supplements.

\subsubsection{Generalizing the Uniform Sampling Assumption}
In previous study of risk consistency, the partial label set $\vec{Y}$ is assumed to be independently and uniformly sampled given a specific true label $Y$ \citep{feng2020provably}, i.e. 
\begin{align}\label{eq::uniform_dist}
	\mathrm{P}(\vec{Y}=\vec{y} \,|\, Y=y, x) = \left\{
	\begin{aligned}
		&\frac{1}{2^{k-1}-1}, &\text{ if } y \in \vec{y},\\
		&0, &\text{ otherwise. }
	\end{aligned}
\right.
\end{align}
Note that this data generation procedure is equivalent to assuming
$\mathrm{P}(y \in \vec{Z} \,|\, x) = \frac{1}{2}$,
where $\vec{Z}$ is an unknown label set uniformly sampled from $[K]$.
The intuition behind is that if no information of $\vec{Z}$ is given, we may randomly guess with even probabilities whether the correct $y$ is included in an unknown label set $\vec{Z}$ or not.

However, in real-world situations, some combination of partial labels may be more likely to appear than others. 
Instances belonging to certain classes usually share similar features e.g. images of \textit{dog} and \textit{cat} may look alike, while they may be less similar to images of \textit{truck}. 
Thus, given these shared features indicating the true label of an instance, the probability of label $z \neq y$ entering the partial label set may be different. 
For instance, when the true label is \textit{dog}, \textit{cat} is more likely to be picked as a partial label than \textit{truck}.

Therefore, in this paper, we generalize the uniform sampling of partial label sets, and allow the sampling probability to be \textit{label-specific}.
Denote $q_z \in [0,1]$ as 
\begin{align}
	q_z := \mathrm{P}(z \in \vec{Y} \,|\, Y = y, x),
\end{align}
for $z \in [K]$.
Then for $z =y$, we have $q_y = 1$ according to the problem settings of learning from partial labels, and for $z \neq y$, we have $q_z < 1$ due to the small ambiguity degree condition \citep{cour2011learning}, which guarantees the ERM learnability of partial label learning problems \citep{liu2014learnability, lv2020progressive}.
Then when the elements in $\vec{y}$ is assumed to be independently drawn, the conditional distribution of the partial label set $\vec{Y}$ turns out to be
	\begin{align}
		\mathrm{P}(\vec{Y}=\vec{y} \,|\, Y=y, x) 
		= \prod_{s \in \vec{y}, s \neq y} q_s \cdot \prod_{t \notin \vec{y}} (1-q_t).
	\end{align}
		where $y$ is the true label of input $x$.

Note that the above generation procedure of the partial label set allows the existence of $[K]$ to be a partial label set. 
If we want to rule out this set, we can simply drop it and sample the partial label set again.
By this means, the conditional distribution becomes
\begin{align*}
	\mathrm{P}(\vec{Y}=\vec{y} \,|\, Y=y, x) 
	= \frac{1}{1-M} \prod_{s \in \vec{y}, s \neq y} q_s \cdot \prod_{t \notin \vec{y}} (1-q_t),
\end{align*}
where $M = \prod_{z \neq y} q_z$.
Taking the special case where $q_z = 1/2$ for all $z \neq y$, we reduce to the generation procedure \eqref{eq::uniform_dist} as in \cite{feng2020provably}.

%

\subsubsection{Risk-consistent Loss Function}

Under the above generation procedure, we take a deeper look at our proposed LW loss and prove its risk consistency.

\begin{theorem}\label{thm::weight}
The LW partial loss function proposed in \eqref{eq::partial_loss} is risk-consistent with respect to the supervised loss function with the form
\begin{align}\label{eq::lv2020progressive_loss}
\mathcal{L}_{\psi}(y,g&(x)) = w_y \psi(g_y(x)) 
\nonumber\\
&+ \sum_{z \neq y} w_z q_z \big[\psi(g_z(x)) + \beta \psi(-g_z(x))\big].
\end{align}
\end{theorem}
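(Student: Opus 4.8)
The plan is to reduce the claimed equality of two risks to a pointwise (conditional) identity and then verify that identity by a direct computation. Risk consistency requires $\bar{\mathcal{R}}(\bar{\mathcal{L}}_\psi, g) = \mathcal{R}(\mathcal{L}_\psi, g)$ for every score function $g$. Writing $\bar{\mathcal{R}}(\bar{\mathcal{L}}_\psi, g) = \mathbb{E}_{(X,\vec{Y})}[\bar{\mathcal{L}}_\psi(\vec{Y}, g(X))]$ and using that the data are generated as $\mathrm{P}(X,Y,\vec{Y}) = \mathrm{P}(\vec{Y}\mid X,Y)\,\mathrm{P}(X,Y)$, while $g$ and $\mathcal{L}_\psi$ depend only on $(x,y)$, the tower property shows it is enough to prove, for every $x$ and every true label $y$,
\begin{align}
\mathbb{E}_{\vec{Y}\mid Y=y,\,x}\big[\bar{\mathcal{L}}_\psi(\vec{Y}, g(x))\big] = \mathcal{L}_\psi(y, g(x)),\nonumber
\end{align}
after which integrating both sides against $\mathrm{P}(X=x, Y=y)$ yields the theorem. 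Since $\psi \ge 0$ and $K$ is finite, all the exchanges of sums and expectations below are harmless.

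First I would rewrite the partial loss as a single sum over $[K]$ with indicator functions,
\begin{align}
\bar{\mathcal{L}}_\psi(\vec{y}, g(x)) = \sum_{z\in[K]} w_z\Big(\mathbf{1}\{z\in\vec{y}\}\,\psi(g_z(x)) + \beta\,\mathbf{1}\{z\notin\vec{y}\}\,\psi(-g_z(x))\Big),\nonumber
\end{align}
and push the conditional expectation inside the finite sum. The only randomness left is in the two indicators, so the computation collapses to the label-specific inclusion probabilities $\mathbb{E}[\mathbf{1}\{z\in\vec{Y}\}\mid Y=y,x] = q_z$ and $\mathbb{E}[\mathbf{1}\{z\notin\vec{Y}\}\mid Y=y,x] = 1-q_z$. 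It is worth emphasizing that only these marginals enter — the full product form of $\mathrm{P}(\vec{Y}=\vec{y}\mid Y=y,x)$ is never used — which is precisely why relaxing the uniform-sampling assumption to label-specific $q_z$ costs nothing here.

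The remaining work is algebraic. Using $q_y = 1$ isolates the $z=y$ contribution as exactly $w_y\psi(g_y(x))$, with no $\beta$-part since $1-q_y = 0$; this matches the first term of $\mathcal{L}_\psi$. Collecting the $z\neq y$ terms and identifying them with $\sum_{z\neq y} w_z q_z\big[\psi(g_z(x)) + \beta\psi(-g_z(x))\big]$ is the one step that demands care: one must track carefully which weight enters the complement sum $\beta\sum_{z\notin\vec{y}}$ together with the inclusion probabilities, so as to recover the coefficient $\beta w_z q_z$ on $\psi(-g_z(x))$, and I would cross-check the resulting supervised loss against the three special cases in Section~\ref{sec::LWloss} (e.g.\ $\beta = 1$ with $q_z \equiv 1/2$) as a sanity test. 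This bookkeeping is the main obstacle; everything else is routine. Once the conditional identity is established, integrating over $(X,Y)$ gives $\bar{\mathcal{R}}(\bar{\mathcal{L}}_\psi, g) = \mathcal{R}(\mathcal{L}_\psi, g)$ for all $g$, i.e.\ risk consistency, and hence classifier consistency; the Bayes-consistency claim is then a separate matter, requiring a calibration-type argument linking the minimizer of $\mathcal{R}(\mathcal{L}_\psi,\cdot)$ to the Bayes rule of the $0$-$1$ loss.
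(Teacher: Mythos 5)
Your reduction to the pointwise identity $\mathbb{E}_{\vec{Y}\mid Y=y,x}\big[\bar{\mathcal{L}}_\psi(\vec{Y},g(x))\big]=\mathcal{L}_\psi(y,g(x))$ is the same first step as the paper's Theorem~\ref{thm::partialrisk}, but from there you take a genuinely different route: you expand the partial loss with indicators and use only the marginal inclusion probabilities $q_z$, whereas the paper expands the conditional expectation as a sum over all subsets containing $y$, invokes the product form of $\mathrm{P}(\vec{Y}=\vec{y}\mid Y=y,x)$ together with the normalization identity of Lemma~\ref{lem::prob_sum}, and re-indexes subsets to extract each coefficient. Your observation that the independence/product structure is never needed --- only the marginals --- is correct and would, if carried through, strengthen the hypothesis of the theorem.

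However, there is a genuine gap, and it sits exactly at the step you flag as ``the one step that demands care'' and then do not carry out. By your own setup, $\mathbb{E}[\mathbf{1}\{z\notin\vec{Y}\}\mid Y=y,x]=1-q_z$, so linearity of expectation gives the coefficient $\beta w_z(1-q_z)$ on $\psi(-g_z(x))$ for $z\neq y$; your method therefore yields $\mathcal{L}_\psi(y,g(x))=w_y\psi(g_y(x))+\sum_{z\neq y}w_z\bigl[q_z\psi(g_z(x))+\beta(1-q_z)\psi(-g_z(x))\bigr]$, not the stated $\sum_{z\neq y}w_zq_z\bigl[\psi(g_z(x))+\beta\psi(-g_z(x))\bigr]$. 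You assert without computation that the bookkeeping ``recovers the coefficient $\beta w_z q_z$''; it does not, and the two expressions coincide only when $q_z=1/2$, which is precisely why your proposed sanity check against the uniform case would fail to detect the discrepancy. A $K=2$ check with $q_2=q$ settles it: the conditional expectation is $(1-q)[w_1\psi(g_1)+\beta w_2\psi(-g_2)]+q[w_1\psi(g_1)+w_2\psi(g_2)]=w_1\psi(g_1)+qw_2\psi(g_2)+\beta(1-q)w_2\psi(-g_2)$, confirming the $(1-q_z)$ factor. The paper's proof arrives at $q_z$ by rewriting $\sum_{\vec{y}\in\vec{\mathcal{Y}}^y}\sum_{z\notin\vec{y}}$ as a sum over $\vec{\mathcal{Y}}^y\cap\vec{\mathcal{Y}}^z$, whereas the constraint $z\notin\vec{y}$ actually selects $\vec{\mathcal{Y}}^y\setminus\vec{\mathcal{Y}}^z$; so the mismatch between what your method produces and what the statement claims is not a defect of your method, but it is a conflict you must surface and resolve explicitly rather than paper over by asserting the target formula as the outcome of a computation you have not done.
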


Theorem \ref{thm::weight} indicates the existence of a loss function $\mathcal{L}_{\psi}$ for supervised learning to which the LW loss $\bar{\mathcal{L}}_{\psi}$ is risk consistent.
Note that the resulting form of the supervised loss function \eqref{eq::lv2020progressive_loss} is a widely used multi-class scheme in supervised learning, e.g. \citet{crammer2001algorithmic, rifkin2004defense, tewari2007consistency}.

It is worth mentioning that this is the first time that a risk consistency analysis is conducted under a label-specific sampling of the partial label set.
Moreover, compared with \citet{lv2020progressive}, where the proposed loss function is proved to be classifier consistent under the deterministic scenario, our result on risk consistency is a stronger claim and applies to both deterministic and stochastic scenarios.

{
The next theorem shows that as long as $\beta>0$, the supervised risk induced by \eqref{eq::lv2020progressive_loss} is consistent to the Bayes risk $\mathcal{R}^*$.
That is, optimizing the supervised loss in \eqref{eq::lv2020progressive_loss} can result in the Bayes classifier under $0\text{-}1$ loss.
\begin{theorem}\label{thm::calibration}
	 Let $\mathcal{L}_{\psi}$ be of the form in \eqref{eq::lv2020progressive_loss} and $\mathcal{L}_{0\text{-}1}$ be the multi-class $0$-$1$ loss. 
	 Assume that $\psi(\cdot)$ is differentiable and symmetric, i.e. $\psi(g_z(x)) + \psi(-g_z(x)) = 1$. 
	 For $\beta>0$, if there exist a sequence of functions $\{\hat{g}_n\}$ such that 
	\begin{align*}
		\mathcal{R}(\mathcal{L}_{\psi},\hat{g}_n) \to \mathcal{R}^*_{\mathcal{L}_{\psi}},
	\end{align*}
	then we have 
	 \begin{align*}
	 \mathcal{R}(\mathcal{L}_{0\text{-}1},\hat{g}_n) \to \mathcal{R}^*.
	 \end{align*}
\end{theorem}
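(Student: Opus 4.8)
The plan is to follow the classical calibration argument of Bartlett--Jordan--McAuliffe and Tewari--Bartlett: it suffices to show that the surrogate loss $\mathcal{L}_\psi$ is \emph{classification calibrated}, i.e.\ that minimizing the conditional $\mathcal{L}_\psi$-risk pointwise forces the minimizer to put its largest coordinate on the Bayes-optimal label. Write $\eta_k(x) = \mathrm{P}(Y=k\mid x)$ and, for a fixed $x$, define the conditional risk $W(g) := \sum_{k=1}^K \eta_k \big[ w_k \psi(g_k) + \sum_{z\neq k} w_z q_z (\psi(g_z)+\beta\psi(-g_z))\big]$. Using the symmetry assumption $\psi(t)+\psi(-t)=1$ to rewrite $\psi(g_z)+\beta\psi(-g_z) = (1-\beta)\psi(g_z) + \beta$, the coordinate $g_j$ appears in $W$ only through the term $\big(\eta_j w_j + (1-\eta_j) w_j q_j (1-\beta)\big)\psi(g_j)$ plus a constant independent of $g_j$. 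Hence $W$ decouples across coordinates, and since $\psi$ is non-increasing, each coordinate is driven toward the value minimizing $c_j\,\psi(g_j)$ where $c_j := w_j\big(\eta_j + (1-\eta_j) q_j (1-\beta)\big)$.

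The key step is then to show that the ordering of the optimal scores matches the ordering of the $\eta_k$ — specifically that $\arg\max_k g_k^\star = \arg\max_k \eta_k$ — so that thresholding $g^\star$ recovers the Bayes classifier. Here is where $\beta>0$ enters: when $\beta \in (0,1)$ the multiplier $c_j$ is a strictly increasing function of $\eta_j$ (the coefficient on $(1-\eta_j)$ is $w_j q_j(1-\beta) \ge 0$, and $q_j < 1$), so larger $\eta_j$ yields a larger weight on the decreasing function $\psi$, and a standard argument (differentiate $c_j\psi(g_j)$, or compare $c_j\psi(g_j)+c_l\psi(g_l)$ against the swapped assignment) shows the minimizing $g_j^\star$ is monotone in $\eta_j$; when $\beta\ge 1$ one rewrites $\psi(g_z)+\beta\psi(-g_z) = \beta + (1-\beta)\psi(g_z)$ and argues analogously, noting that as long as $\beta>0$ the $\psi(g_y)$ term from the true label ($w_y\psi(g_y)$, with coefficient $\eta_y$) is never cancelled, preserving strict monotonicity in $\eta_j$. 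Thus $\mathcal{L}_\psi$ is classification calibrated.

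Having established pointwise calibration, I would invoke the generic surrogate-to-target excess-risk comparison: there exists a nondecreasing $\zeta$ with $\zeta(0)=0$, continuous at $0$, such that $\zeta\big(\mathcal{R}(\mathcal{L}_{0\text{-}1},g) - \mathcal{R}^*\big) \le \mathcal{R}(\mathcal{L}_\psi,g) - \mathcal{R}^*_{\mathcal{L}_\psi}$ for all $g$. Applying this to the sequence $\{\hat g_n\}$, the hypothesis $\mathcal{R}(\mathcal{L}_\psi,\hat g_n) \to \mathcal{R}^*_{\mathcal{L}_\psi}$ forces $\zeta\big(\mathcal{R}(\mathcal{L}_{0\text{-}1},\hat g_n)-\mathcal{R}^*\big)\to 0$, and continuity of $\zeta$ at $0$ together with its monotonicity yields $\mathcal{R}(\mathcal{L}_{0\text{-}1},\hat g_n)\to\mathcal{R}^*$, which is the claim.

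The main obstacle I anticipate is the calibration step in the regime where ties or degeneracies can occur: if some $w_j=0$ then $c_j=0$ and the coordinate $g_j$ is unconstrained, so one must argue that the relevant comparison only involves coordinates with positive weight (in the LW algorithm the true label and its confusable neighbors), and that the Bayes-optimal label indeed has positive weight; similarly, when $\beta = 1$ exactly, the coefficient on $(1-\eta_j)$ vanishes and calibration relies entirely on the $\eta_j w_j \psi(g_j)$ term, which still gives strict monotonicity provided $w_j>0$. Handling these boundary cases cleanly — and verifying that the decoupling/monotonicity argument does not secretly require convexity of $\psi$, only monotonicity and symmetry — is the delicate part; the excess-risk transfer in the last paragraph is then routine.
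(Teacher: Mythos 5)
Your opening move coincides with the paper's: using $\psi(t)+\psi(-t)=1$, the conditional risk collapses to $\sum_j c_j\psi(g_j)+\mathrm{const}$ with $c_j = w_jq_j\bigl(\beta\eta_j-(\beta-1)\bigr)$, and everything then hinges on showing that pointwise minimization forces $\argmax_j g_j$ onto $\argmax_j\eta_j$. From there the routes diverge: the paper does not prove calibration from scratch, but matches the decoupled risk to that of the constraint comparison loss $\mathcal{L}_{CCM}(y,g)=\sum_{k\neq y}\psi(-g_k)$ of Lee et al.\ \emph{with the constraint} $\sum_k g_k=0$, and then imports consistency from Example 3 in Section 5.3 of Tewari and Bartlett --- which is exactly where the differentiability hypothesis on $\psi$ is consumed. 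Your final $\zeta$-transfer step is the standard consequence of calibration and is unobjectionable.

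The genuine gap is in your calibration step. Because the conditional risk decouples coordinatewise and you impose no constraint on $g$, pointwise minimization tells you nothing about the argmax: every coordinate with $c_j>0$ is independently driven to $g_j\to+\infty$, where the bounded symmetric $\psi$ approaches its infimum, so the infimal conditional risk is attained in the limit by sequences whose largest coordinate sits on \emph{any} label with $c_j>0$, not necessarily the Bayes label. Your swap argument only produces \emph{some} minimizer ordered consistently with the $c_j$; calibration requires that \emph{every} near-minimizing sequence eventually predicts correctly, and that fails for an unconstrained, decoupled objective with a merely non-increasing $\psi$. This is precisely why the paper routes the argument through CCM with the sum-to-zero constraint, and why the theorem assumes $\psi$ differentiable; your remark that the argument ``does not secretly require convexity, only monotonicity and symmetry'' is the tell that the needed ingredient has been dropped. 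A secondary issue: since the $w_j$ are not uniform, $c_j$ being increasing in $\eta_j$ for fixed $w_j$ does not make the ordering of $(c_j)_j$ match that of $(\eta_j)_j$; the paper closes this by invoking $\argmax_j w_j=\argmax_j \eta_j$ and $p_{y^*}=1$ from the algorithmic construction, and your proof would need an analogous assumption rather than leaving it to the degenerate-case discussion.
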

Combined with Theorem \ref{thm::weight}, when $\beta > 0$, we have our LW loss consistent to the Bayes classifier.
}

\subsubsection{Guidance on the Choice of $\beta$}

In this section, we try to answer the question why we should choose some certain values of $\beta$ for the LW loss $\bar{\mathcal{L}}_{\psi}$ instead of others when learning from partial labels. 
Recall that when minimizing a risk consistent partial loss function in partial label learning, we are at the same time minimizing the corresponding supervised loss.
Therefore, by Theorem \ref{thm::weight}, a satisfactory supervised loss $\mathcal{L}_{\psi}$ in supervised learning naturally corresponds to an LW loss $\bar{\mathcal{L}}_{\psi}$ with the desired value of the leverage parameter $\beta$ in partial label learning.

When we take a closer look at the right-hand side of \eqref{eq::lv2020progressive_loss}, the loss function $\mathcal{L}_{\psi}$ to which LW loss is risk-consistent always contains the term $\psi(g_y)$, which focuses on identifying the true label $y$.
On the other hand, an interesting finding is that the leverage parameter $\beta$ determines the relative scale of $\psi(g_z)$ and $\psi(-g_z)$ for all $z \neq y$, while it does not affect the loss on the true label $y$.
 

In the following discussions, we focus on symmetric binary loss $\psi(\cdot)$, where $\psi(g_z(x)) + \psi(-g_z(x)) = 1$, for their fine theoretical properties.
We remark that commonly adopted loss functions such as zero-one loss, Sigmoid loss, Ramp loss, etc. satisfy the symmetric condition.
In what follows, we present the results of risk consistency for LW loss with specific values of $\beta$, and discuss each case respectively. 


\textbf{Case 1:}
When $\beta = 0$ \citep[e.g.][]{lv2020progressive},
the LW loss function $\bar{\mathcal{L}}_{\psi}$ is risk-consistent to
\begin{align}
	w_y \psi(g_y(x)) + \sum_{z \neq y} w_z q_z \psi(g_z(x)).
\end{align}
In this case, in addition to focusing on the true label $y$, $\mathcal{L}_{\psi}$ also gives positive weights to the untrue labels as long as there exists a label $z \neq y$ such that $w_z > 0$.
Since the minimization of $\psi(g_z)$ may lead to false identification of label $z \neq y$, $\beta = 0$ is not preferred for LW loss.


\textbf{Case 2:}
When $\beta = 1$ \citep[e.g.][]{Jin2002multiple, cour2011learning}, 
the LW loss function $\bar{\mathcal{L}}_{\psi}$ is risk-consistent to
\begin{align}
	w_y \psi(g_y(x)) + \sum_{z \neq y} w_z q_z.
\end{align}
In this case, the minimization of $\bar{\mathcal{L}}_{\psi}$ indicates the minimization of $\mathcal{L}_{\psi} = \psi(g_y(x))$, aiming at directly identifying the true label $y$. 
The idea is similar to that of the cross entropy loss, where $\mathcal{L}_{CE}(y,g(x)) := -\log(g_y(x))$ .
Therefore, we take $\beta = 1$ as a reasonable choice for LW loss.

\textbf{Case 3:}
When $\beta = 2$, 
the LW loss function $\bar{\mathcal{L}}_{\psi}$ is risk-consistent with 
\begin{align}
	w_y \psi(g_y(x)) &+ \sum_{z \neq y} w_z q_z \psi(-g_z(x)) 
	+ \sum_{z \neq y} w_z q_z.
\end{align}
In this case, the LW loss not only encourages the learner to identify the true label $y$ by minimizing $\psi(g_y)$, but also helps rule out the untrue labels $z \neq y$ by punishing large value of $\psi(-g_z)$.
Moreover, for a confusing label $z \neq y$ that is more likely to appear in the partial label set, i.e. $q_z$ is larger, $\mathcal{L}_{\psi}$ imposes severer punishment on $g_z$. 
Therefore, $\beta = 2$ is also a preferred choice for LW loss.
Especially, when taking $w_z = 1/q_z$ for $z \in [K]$, we achieve the form
\begin{align}
\psi(g_y(x)) + \sum_{z\neq y}\psi(-g_z(x)) + K-1,
\end{align}
which exactly corresponds to the \textit{one-versus-all} (OVA) loss function proposed by 
\citet{zhang2004statistical}.

{
To conclude, it is not a good choice for LW loss to take $\beta = 0$, as most commonly used loss functions do.
Our theoretical interpretations of risk consistency show that $\beta>0$ and especially $\beta>1$ are preferred choices, which are also empirically verified in Section \ref{sec::parameter}.
}

\subsection{Practical Algorithm}\label{sec::main_alg}


In the theoretical analysis in the previous section, we focus on partial and supervised loss functions that are consistent in risk.
However, in experiments, the risk for partial label loss is not directly accessible since the underlying distribution of $\mathrm{P}(X, \vec{Y})$ is unknown. 
Instead, on the partially labeled sample 
$D_n:=\{(x_1,\vec{y}_1), \ldots, (x_n,\vec{y}_n)\}$, we try to minimize the empirical risk 
of a learning algorithm defined by
\begin{align}\label{eq::empirical_risk}
	\bar{\mathcal{R}}_{D_n}(\bar{\mathcal{L}},g(X))
	= \frac{1}{n} \sum_{i=1}^n \bar{\mathcal{L}}(\vec{y}_i, g(x_i)).
\end{align}
Moreover, in this part we take the network parameters $\theta$ for score functions $g(x):= \big(g_1(x),\ldots, g_K(x)\big)$ into consideration, and write $g(x;\theta)$ and $g_z(x;\theta)$ instead.

\textbf{Determination of weighting parameters.}
Since our goal is to find out the unique true label after observing partially labeled data, we'd like to focus more on the true label contained in the partial label set, while ruling out the most confusing one outside this set. 
Therefore, we assign larger weights to $\psi(g_y(x))$, where $y$ denotes the true label of $x$, and to $\psi(-g_z(x))$, where $z$ is the non-partial label with the highest score among $[K] \setminus \vec{y}$.

However, since we cannot directly observe the true label $y$ for input $x$ from the partially labeled data, the weighting parameters cannot be directly assigned. 
Therefore, inspired by the EM algorithm \citep{dempster1977maximum} and PRODEN \citep{lv2020progressive}, we learn the weighting parameters through an iterative process instead of assigning fixed values. 

To be specific, at the $t$-th step, given the network parameters $\theta^{(t)}$, we calculate the weighting parameters by respectively normalizing the score functions $g_z(x;\theta)$ for $z \in \vec{y}$ and those for $z \notin \vec{y}$, i.e.
\begin{align}
w_z^{(t)} &= \frac{\exp(g_z(x;\theta^{(t)}))}{\sum_{z \in \vec{y}} \exp(g_z(x;\theta^{(t)}))} \text{ for }z \in \vec{y}, \text{ and }
\label{eq::weights1}
\\
w_z^{(t)} &= \frac{\exp(g_z(x;\theta^{(t)}))}{\sum_{z \notin \vec{y}} \exp(g_z(x;\theta^{(t)}))} \text{ for }z \notin \vec{y}.
\label{eq::weights2}
\end{align} 
By this means we have $\sum_{z\in\vec{y}} w_z^{(t)} = \sum_{z\notin\vec{y}} w_z^{(t)}= 1$.
Note that $w_z^{(t)}$ varies with sample instances. Thus for each instance $(x_i, \vec{y}_i)$, $i=1, \ldots, n$, we denote the weighting parameter as $w_{z,i}^{(t)}$.
As a special reminder, we initialize 
	$w_{z,i}^{(0)} = \frac{1}{\#|\vec{y}_i|}$ for $z \in \vec{y}_i$ and
	$w_{z,i}^{(0)} = \frac{1}{K-\#|\vec{y}_i|}$ for $z \notin \vec{y}_i$.

The intuition behind the respective normalization is twofold. 
First of all, by respectively normalizing scores of partial labels and non-partial ones, we achieve our primary goal of focusing on the true label and the most confusing non-partial label.
Secondly, if we simply perform normalization on all score functions, the weights for partial labels tend to grow rapidly through training, resulting in much larger weights for the partial losses than the non-partial ones. 
Thus, as the training epochs grow, the losses on non-partial labels as well as the leverage parameter $\beta$ gradually become ineffective, which we are not pleased to see.

The main algorithm is shown in Algorithm \ref{alg::LW}.
{
Note that here $\beta$ is a hyper-parameter tuned by validation while $w$ is the parameter trained through data.
}

\begin{algorithm}[htbp!]
	\caption{LW Loss for Partial Label Learning}
	\label{alg::LW}
	\begin{algorithmic}
		\STATE {\bfseries Input:} 
		Training data $D_n := \{(x_1, \vec{y}_1),\ldots,(x_n,\vec{y}_n)\}$;\\
		\quad\quad\quad Leverage parameter $\beta$; \\
		\quad\quad\quad Learning rate $\rho$;\\
		\quad\quad\quad Number of Training Epochs $T$;\\
		\FOR{$t =1 \textbf{ to } T$}
		\STATE Calculate $\bar{\mathcal{R}}_{D_n}^{(t)} (\bar{\mathcal{L}}^{(t-1)},g(x;\theta^{(t-1)}))$ by \eqref{eq::empirical_risk};
		\STATE Update network parameter $\theta^{(t)}$ and achieve $g(x;\theta^{(t)})$.
		\STATE Update weighting parameters $w_{z,i}^{(t)}$ by \eqref{eq::weights1} and \eqref{eq::weights2};
		\ENDFOR
		\STATE {\bfseries Output:}
		Decision function
			$\argmax_{z \in [K]} g_z(x;\theta^{(T)})$.
	\end{algorithmic}
\end{algorithm}

\section{Experiments}

In this part, we empirically verify the effectiveness of our proposed algorithm through performance comparisons as well as other empirical understandings.

\subsection{The Classification Performance}\label{sec::comparisons}

In this section, we conduct empirical comparisons with other state-of-the-art partial label learning algorithms on both benchmark and real datasets.

\begin{table*}[!h] 
	\setlength{\tabcolsep}{9pt}
	\centering
	\caption{Accuracy comparisons on benchmark datasets.}
	\label{tab::SynCompare}
	\resizebox{\textwidth}{!}{
		\begin{tabular}{cl clll}
			\toprule
			Dataset & Method & Base Model& \quad\ \ $q=0.1$ & \quad\ \ $q=0.3$ & \quad\ \ $q=0.5$ \\
			\midrule
			\multirow{5}*{MNIST} 
			& RC & MLP & $98.44 \pm 0.11\%*$ & $98.29 \pm 0.05\%*$ & $98.14 \pm 0.03\%*$ \\
			& CC & MLP & $98.56 \pm 0.06\%*$ & $98.32 \pm 0.06\%*$ & $98.21 \pm 0.07\%*$ \\
			& PRODEN & MLP & $98.57 \pm 0.07\%*$ & $98.48 \pm 0.10\%*$ & $98.40 \pm 0.15\%*$ \\
			& LW-Sigmoid & MLP & $\underline{98.82 \pm 0.04\%}$ & $\underline{98.74 \pm 0.07\%}$ & $\underline{98.55 \pm 0.07\%}$ \\
			& LW-Cross entropy & MLP & $\mathbf{98.89 \pm 0.06\%}$ & $\mathbf{98.81 \pm 0.06\%}$ & $\mathbf{98.59 \pm 0.15\%}$ \\
			\hline
			\multirow{5}*{Fashion-MNIST} 
			& RC & MLP & $89.69 \pm 0.08\%*$ & $89.47 \pm 0.04\%*$ & $\underline{88.97 \pm 0.06\%}*$ \\
			& CC & MLP & $89.63 \pm 0.10\%*$ & $89.11 \pm 0.19\%*$ & $88.31 \pm 0.14\%*$ \\
			& PRODEN & MLP & $89.62 \pm 0.13\%*$ & $89.17 \pm 0.08\%*$ & $88.72 \pm 0.18\%*$ \\
			& LW-Sigmoid & MLP & $\underline{90.25 \pm 0.16\%}$ & $\underline{89.67 \pm 0.15\%}*$ & $88.76 \pm 0.03\%*$ \\
			& LW-Cross entropy & MLP & $\mathbf{90.52 \pm 0.19\%}$ & $\mathbf{90.15 \pm 0.13\%}$ & $\mathbf{89.54 \pm 0.10\%}$ \\
			\hline
			\multirow{5}*{Kuzushiji-MNIST} 
			& RC & MLP & $92.12 \pm 0.17\%*$ & $91.83 \pm 0.18\%*$ & $90.84 \pm 0.26\%*$ \\
			& CC & MLP & $92.57 \pm 0.14\%*$ & $92.08 \pm 0.06\%*$ & $90.58 \pm 0.18\%*$ \\
			& PRODEN & MLP & $92.20 \pm 0.43\%*$ & $91.18 \pm 0.15\%*$ & $89.64 \pm 0.32\%*$ \\
			& LW-Sigmoid & MLP & $\underline{93.63 \pm 0.39\%}$ & $\underline{92.92 \pm 0.28\%}*$ & $\underline{91.81 \pm 0.25\%}*$ \\
			& LW-Cross entropy & MLP & $\mathbf{94.14 \pm 0.12\%}$ & $\mathbf{93.57 \pm 0.13\%}$ & $\mathbf{92.30 \pm 0.23\%}$ \\
			\hline
			\multirow{5}*{CIFAR-10} & RC & ConvNet & $86.53 \pm 0.12\%*$ & $85.90 \pm 0.13\%*$ & $84.48 \pm 0.17\%*$ \\
			& CC & ConvNet & $86.47 \pm 0.22\%*$ & $85.33 \pm 0.19\%*$ & $82.74 \pm 0.22\%*$ \\
			& PRODEN & ConvNet & $89.71 \pm 0.13\%*$ & $88.57 \pm 0.10\%*$ & $85.95 \pm 0.14\%*$ \\
			& LW-Sigmoid & ConvNet & $\mathbf{90.88 \pm 0.09\%}$ & $\mathbf{89.75 \pm 0.08\%}$ & $\underline{87.27 \pm 0.15\%}*$ \\
			& LW-Cross entropy & ConvNet & $\underline{90.58 \pm 0.04\%}*$ & $\underline{89.68 \pm 0.10\%}$ & $\mathbf{88.31 \pm 0.09\%}$ \\
			\bottomrule 
		\end{tabular}
	}
	\begin{tablenotes}
		\footnotesize
		\item  The best results are marked in \textbf{bold} and the second best marked in \underline{underline}. The standard deviation is also reported. 
		We use $*$ to represent that the best method is significantly better than the other compared methods.
	\end{tablenotes}
\end{table*}

\begin{table*}[!h] 
	\setlength{\tabcolsep}{9pt}
	\centering
	\caption{Accuracy comparisons on real datasets.}
	\label{tab::RealCompare}
	\resizebox{\textwidth}{!}{
		\begin{tabular}{l lllll}
			\toprule
			\multirow{2}*{Method} 
			& \multicolumn{5}{c}{Dataset} \\
			\cline{2-6}
			& \qquad\ Lost & \quad\  MSRCv2 & \quad \ Birdsong & \ \ SoccerPlayer & \ \ YahooNews \\
			\midrule
			IPAL & $62.37 \pm 4.81\%*$ & $50.34 \pm 3.24\%*$ & $70.20 \pm 4.62\%*$ & $55.79 \pm 0.88\%*$ & $64.57 \pm 1.51\%*$ \\
			PALOC & $57.80 \pm 7.00\%*$ & $47.51 \pm 3.78\%*$ & $70.20 \pm 3.79\%*$ & $53.96 \pm 2.38\%*$ & $60.36 \pm 1.48\%*$ \\
			PLECOC & $63.04 \pm 6.72\%*$ & $44.13 \pm 5.06\%*$ & $73.88 \pm 3.41\%$ & $29.39 \pm 9.38\%*$ & $60.41 \pm 1.50\%*$ \\
			\hline
			RC-Linear & $75.93 \pm 3.62\%*$ & $45.82 \pm 4.74\%*$ & $71.73 \pm 2.84\%*$ & $57.00 \pm 2.15\%$ & $67.42 \pm 1.11\%*$ \\
			CC-Linear & $75.57 \pm 3.58\%*$ & $45.56 \pm 3.97\%*$ & $71.83 \pm 2.85\%*$ & $56.75 \pm 1.87\%*$ & $67.43 \pm 1.07\%*$ \\
			PRODEN-Linear & $76.33 \pm 4.51\%$ & $44.04 \pm 4.50\%*$ & $71.97 \pm 2.73\%*$ & $55.93 \pm 2.34\%*$ & $67.47 \pm 1.11\%*$ \\
			LW-Linear & $\mathbf{76.50 \pm 4.16\%}$ & $\underline{46.34 \pm 2.72\%}*$ & $\underline{72.33 \pm 3.29\%}*$ & $\mathbf{57.29 \pm 2.37\%}$ & $\mathbf{68.67 \pm 1.05\%}$ \\
			\hline
			RC-MLP & $63.45 \pm 5.03\%*$ & $51.60 \pm 2.53\%*$ & $73.11 \pm 4.45\%*$ & $53.87 \pm 1.96\%*$ & $63.84 \pm 0.65\%*$ \\
			CC-MLP & $65.29 \pm 4.15\%*$ & $50.97 \pm 3.05\%*$ & $70.97 \pm 3.66\%*$ & $\underline{54.03 \pm 1.77\%}*$ & $62.85 \pm 1.26\%*$ \\
			PRODEN-MLP & $61.41 \pm 5.20\%*$ & $50.54 \pm 3.37\%*$ & $72.74 \pm 4.78\%*$ & $53.48 \pm 1.74\%*$ & $61.88 \pm 0.96\%*$ \\
			LW-MLP & $\underline{66.00 \pm 4.10\%}*$ & $\mathbf{52.23 \pm 3.61\%}$ & $\mathbf{73.89 \pm 4.01\%}$ & $53.64 \pm 1.83\%*$ & $\underline{64.65 \pm 0.98\%}*$ \\
			\bottomrule
		\end{tabular}
	}
	\begin{tablenotes}
		\footnotesize
		\item  The best results among all methods are marked in \textbf{bold} and the best under the same base model is marked in \underline{underline}.
		The standard deviation is also reported. 
		We use $*$ to represent that the best method is significantly better than the other compared methods.
	\end{tablenotes}
\end{table*}

\subsubsection{Benchmark dataset comparisons} \label{sec::simu}

\textbf{Datasets.}
We base our experiments on four benchmark datasets: MNIST \citep{lecun1998gradient}, Kuzushiji-MNIST \citep{clanuwat2018deep}, Fashion-MNIST \citep{xiao2017fashion}, and CIFAR-10 \citep{krizhevsky2009learning}.
We generate partially labeled data by making $K-1$ independent decisions for labels $z \neq y$, where each label $z$ has probability $q_z$ to enter the partial label set. 
In this part we consider $q_z=q$ for all $z \neq y$, where $q \in \{0.1, 0.3, 0.5\}$ and larger $q$ indicates that the partially labeled data is more ambiguous.
We put the experiments based on non-uniform data generating procedures in Section \ref{sec::datagenerate}.
Note that the true label $y$ always resides in the partial label set $\vec{y}$ and we accept the occasion that $\vec{y}=[K]$.
On MNIST, Kuzushiji-MNIST, and Fashion-MNIST, we employ the base model as a $5$-layer perception (MLP).
On the CIFAR-10 dataset, we employ a $12$-layer ConvNet \citep{laine2016temporal}
for all compared methods.
More details are shown in Section \ref{sec::Adata} of the supplements.

\textbf{Compared methods.}
We compare with the state-of-the-art PRODEN \citep{lv2020progressive}, RC and CC \citep{feng2020provably}, with all hyper-parameters searched according to the suggested parameter settings in the original papers. 
For our proposed method, we search the initial learning rate from $\{0.001, 0.005, 0.01, 0.05, 0.1\}$ and weight decay from $\{10^{-6}, 10^{-5}, \ldots, 10^{-2}\}$, with the exponential learning rate decay halved per $50$ epochs. 
We search $\beta \in \{1,2\}$ according to the theoretical guidance discussed in Section \ref{sec::thm_interpret}.
For computational implementations, we use {\tt PyTorch} \citep{paszke2019pytorch} and the stochastic gradient descent (SGD) \citep{robbins1951stochastic} optimizer with momentum $0.9$.
For all methods, we set the mini-batch size as $256$ and train each model for $250$ epochs.
Hyper-parameters are searched to maximize the accuracy on a validation set containing $10\%$ of the partially labeled training samples.
We adopt the same base model for fair comparisons.
More details are shown in Section \ref{sec::Amethod} of the supplements.

\textbf{Experimental results.}
We repeat all experiments $5$ times, and report the average accuracy and the standard deviation.
We apply the Wilcoxon signed-rank test  \citep{wilcoxon1992individual} at the significance level $\alpha=0.05$.
As is shown in Table \ref{tab::SynCompare}, 
when adopting the Sigmoid loss function with fine symmetric theoretical property, our proposed LW loss outperforms almost all other state-of-the-art algorithms for learning with partial labels. 
Moreover, by adopting the widely used cross entropy loss function, the empirical performance of LW can be further significantly improved on MNIST, Fashion-MNIST, and Kuzushiji-MNIST datasets.
We attribute this satisfactory result to the design of a proper leveraging parameter $\beta$, which makes it possible to consider the information of both partial labels and non-partial ones.

\subsubsection{Real Data Comparisons} \label{sec::real}

\textbf{Datasets.}
In this part we base our experimental comparisons on $5$ real-world datasets including: Lost \citep{cour2011learning}, MSRCv2 \citep{liu2012conditional}, BirdSong \citep{briggs2012rank}, Soccer Player \citep{zeng2013learning}, and Yahoo! News \citep{guillaumin2010multiple}.

\textbf{Compared methods.}
Aside from the network-based methods mentioned in Section \ref{sec::simu}, we compare with $3$ other state-of-the-art partial label learning algorithms including IPAL \citep{zhang2015solving},  PALOC \citep{wu2018towards}, and PLECOC \citep{zhang2017disambiguation}, where the hyper-parameters are searched through a $5$-fold cross-validation under the suggested settings in the original papers.
We adopt cross entropy loss for LW and employ both linear model and MLP as base models.
For all compared methods, we adopt a $10$-fold cross-validation to evaluate the testing performances.
Other settings are similar to Section \ref{sec::simu}.

\textbf{Experimental results.}
In Table \ref{tab::RealCompare}, under the same base model, our proposed LW shows the best performance on almost all datasets.
Moreover, on all real datasets, LW loss with proper base models always outperforms other state-of-the-art methods. 
Different from the benchmark datasets, the distribution of real partial labels remains unknown and could be more complex. 
Since our proposed LW loss is risk consistent with desired supervised loss functions under a generalized partial label generation assumption, there is no surprise that it presents satisfactory empirical performance.

\subsection{Empirical Understandings}\label{sec::emp_understand}

In this part, we conduct a series of comprehensive experiments to verify the effectiveness of our proposed LW loss. 

\begin{table*}[htbp!] 
	\setlength{\tabcolsep}{9pt}
	\centering
	\caption{Accuracy comparisons with different data generation.}
	\label{tab::caseCompare}
		\resizebox{\textwidth}{!}{
	\begin{tabular}{cl clll}
		\toprule
		Dataset & Method & Base Model& \qquad Case 1 & \qquad Case 2 & \qquad Case 3  \\
		\midrule
		\multirow{5}*{MNIST} & RC & MLP & $98.49 \pm 0.05\%*$ & $98.53 \pm 0.08\%*$ & $98.43 \pm 0.03\%*$ \\
		& CC & MLP & $98.55 \pm 0.04\%*$ & $98.57 \pm 0.08\%*$ & $98.44 \pm 0.02\%*$ \\
		& PRODEN & MLP & $98.64 \pm 0.15\%*$ & $97.61 \pm 0.10\%*$ & $98.55 \pm 0.12\%*$ \\
		& LW-Sigmoid & MLP & $\underline{98.83 \pm 0.04\%}$ & $\mathbf{98.92 \pm 0.04\%}$ & $\underline{98.69 \pm 0.11\%}$ \\
		& LW-Cross entropy & MLP & $\mathbf{98.88 \pm 0.05\%}$ & $\underline{98.88 \pm 0.09\%}$ & $\mathbf{98.82 \pm 0.05\%}$ \\
		\hline
		\multirow{5}*{Kuzushiji-MNIST} & RC & MLP & $92.61 \pm 0.17\%*$ & $92.47 \pm 0.19\%*$ & $92.07 \pm 0.10\%*$ \\
		& CC & MLP & $92.65 \pm 0.15\%*$ & $92.68 \pm 0.10\%*$ & $91.91 \pm 0.15\%*$ \\
		& PRODEN & MLP & $93.33 \pm 0.20\%*$ & $93.48 \pm 0.33\%*$ & $92.30 \pm 0.15\%*$ \\
		& LW-Sigmoid & MLP & $\underline{93.80 \pm 0.15\%}$ & $\underline{93.87 \pm 0.14\%}*$ & $\underline{93.09 \pm 0.19\%}*$ \\
		& LW-Cross entropy & MLP & $\mathbf{94.03 \pm 0.09\%}$ & $\mathbf{94.23 \pm 0.08\%}$ & $\mathbf{93.55 \pm 0.10\%}$ \\
		\hline
		\multirow{5}*{Fashion-MNIST} & RC & MLP & $89.79 \pm 0.10\%*$ & $89.88 \pm 0.11\%*$ & $89.47 \pm 0.11\%*$ \\
		& CC & MLP & $89.63 \pm 0.12\%*$ & $89.58 \pm 0.20\%*$ & $88.63 \pm 0.33\%*$ \\
		& PRODEN & MLP & $\underline{90.34 \pm 0.19\%}*$ & $89.88 \pm 0.27\%*$ & $89.60 \pm 0.14\%*$ \\
		& LW-Sigmoid & MLP & $90.24 \pm 0.04\%*$ & $\underline{90.32 \pm 0.18\%}$ & $\underline{89.69 \pm 0.21\%}*$ \\
		& LW-Cross entropy & MLP & $\mathbf{90.59 \pm 0.19\%}$ & $\mathbf{90.36 \pm 0.15\%}$ & $\mathbf{90.13 \pm 0.11\%}$ \\
		\hline
		\multirow{5}*{CIFAR-10} & RC & ConvNet & $86.59 \pm 0.34\%*$ & $87.26 \pm 0.06\%*$ & $86.28 \pm 0.17\%*$ \\
		& CC & ConvNet & $86.45 \pm 0.34\%*$ & $86.87 \pm 0.14\%*$ & $84.63 \pm 0.40\%*$ \\
		& PRODEN & ConvNet & $89.03 \pm 0.59\%*$ & $88.19 \pm 0.10\%*$ & $87.16 \pm 0.13\%*$ \\
		& LW-Sigmoid & ConvNet & $\mathbf{90.89 \pm 0.10\%}$ & $\mathbf{90.87 \pm 0.11\%}$ & $\underline{89.26 \pm 0.19\%}*$ \\
		& LW-Cross entropy & ConvNet & $\underline{90.63 \pm 0.08\%}*$ & $\underline{90.51 \pm 0.14\%}*$ & $\mathbf{89.60 \pm 0.09\%}$ \\
		\bottomrule 
	\end{tabular}
		}
	\begin{tablenotes}
		\footnotesize	
		\item {*} The best results are marked in \textbf{bold} and the second best marked in \underline{underline}. The standard deviation is also reported. 
		We use $*$ to represent that the best method is significantly better than the other compared methods.
	\end{tablenotes}
\end{table*}

\subsubsection{Parameter analysis}\label{sec::parameter}

We study the leverage parameter $\beta$ of LW loss by comparing its performances under $\beta \in \{0,1,2,4,8,16,32\}$ respectively.
We employ Sigmoid loss function for LW loss, and 
other experimental settings are similar to Section \ref{sec::simu}.

\begin{figure}[!h]
	\centering
	\subfigure[MNIST, $q=0.1$.]{
		\centering
		\includegraphics[width = 0.22\textwidth, trim= 0 200 0 240, clip]{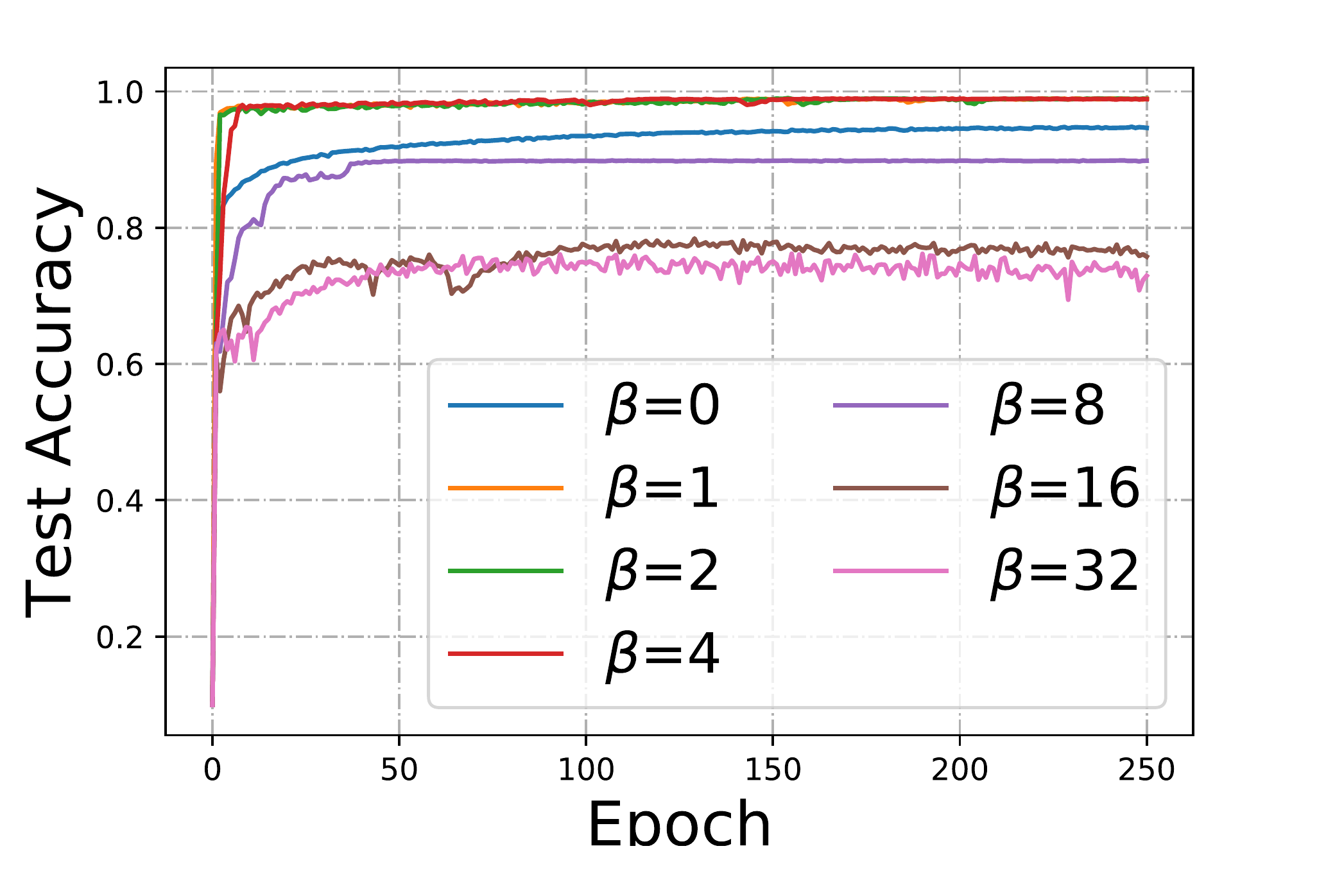}
	}
	\subfigure[Fashion-MNIST, $q=0.3$.]{
		\centering
		\includegraphics[width = 0.22\textwidth, trim= 0 200 0 240, clip]{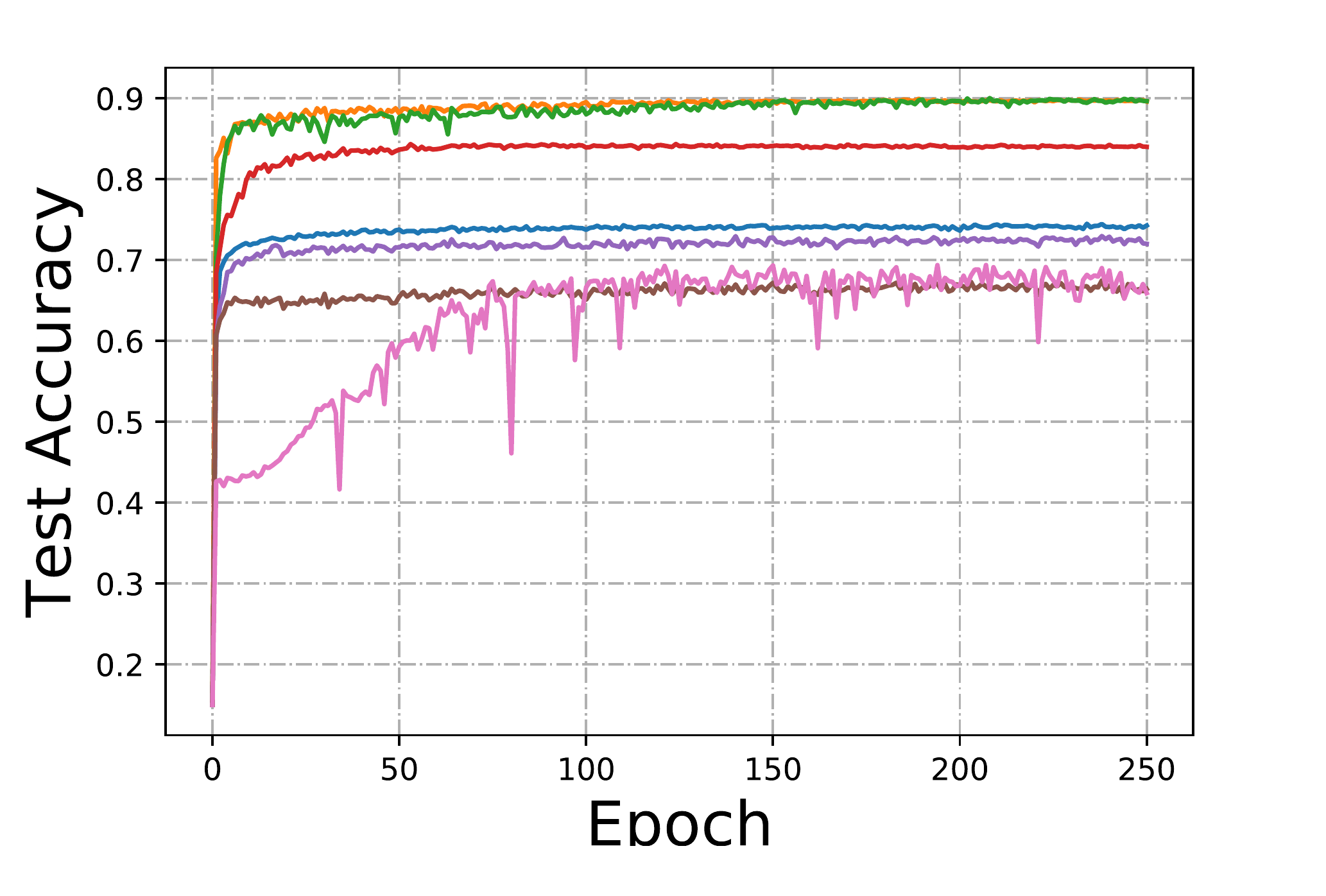}
	}
\\
	\subfigure[CIFAR-10, $q=0.3$.]{
		\centering
		\includegraphics[width = 0.22\textwidth, trim= 0 200 0 240, clip]{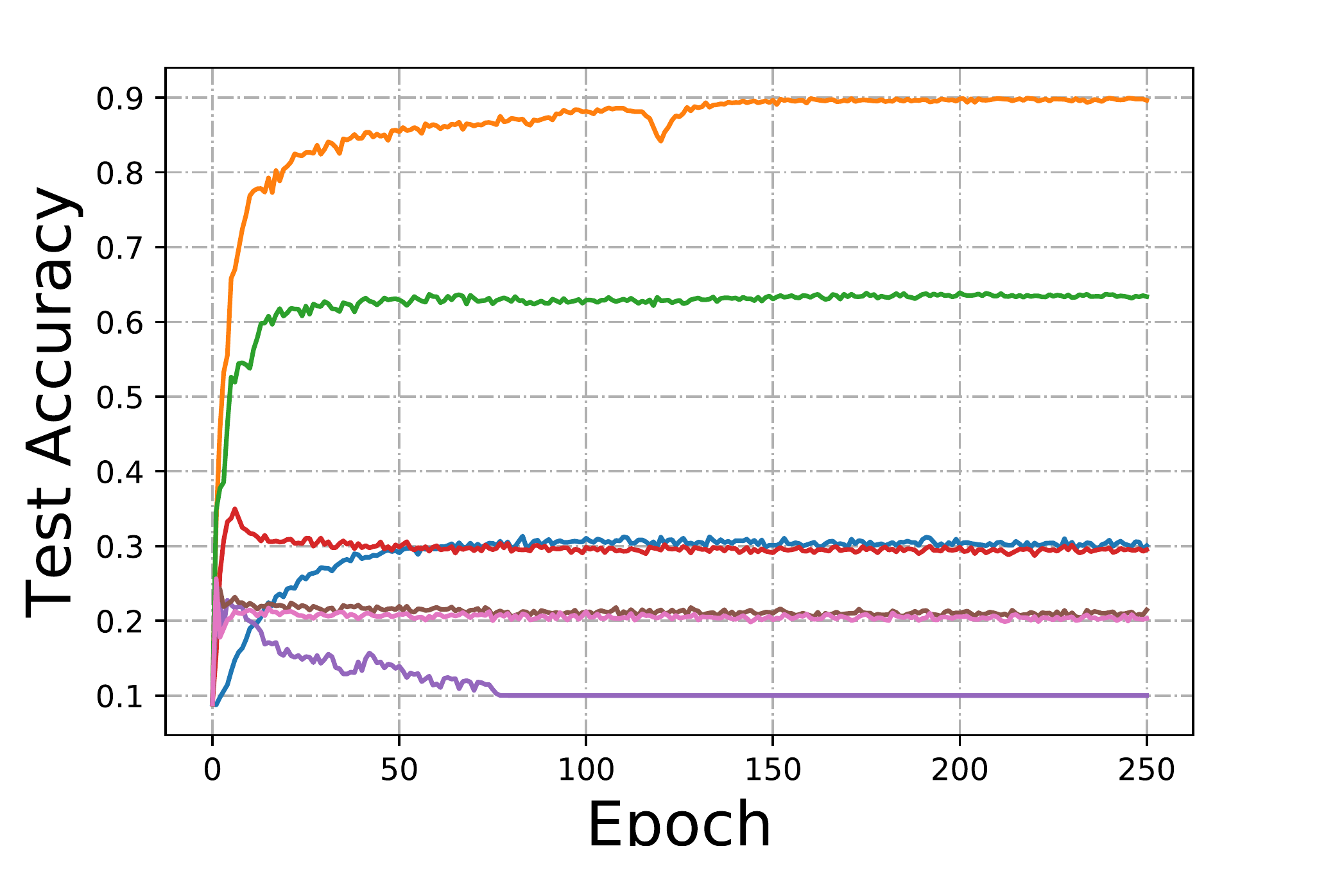}
	}
	\subfigure[Kuzushiji-MNIST, $q=0.5$.]{
		\centering
		\includegraphics[width = 0.22\textwidth, trim= 0 200 0 240, clip]{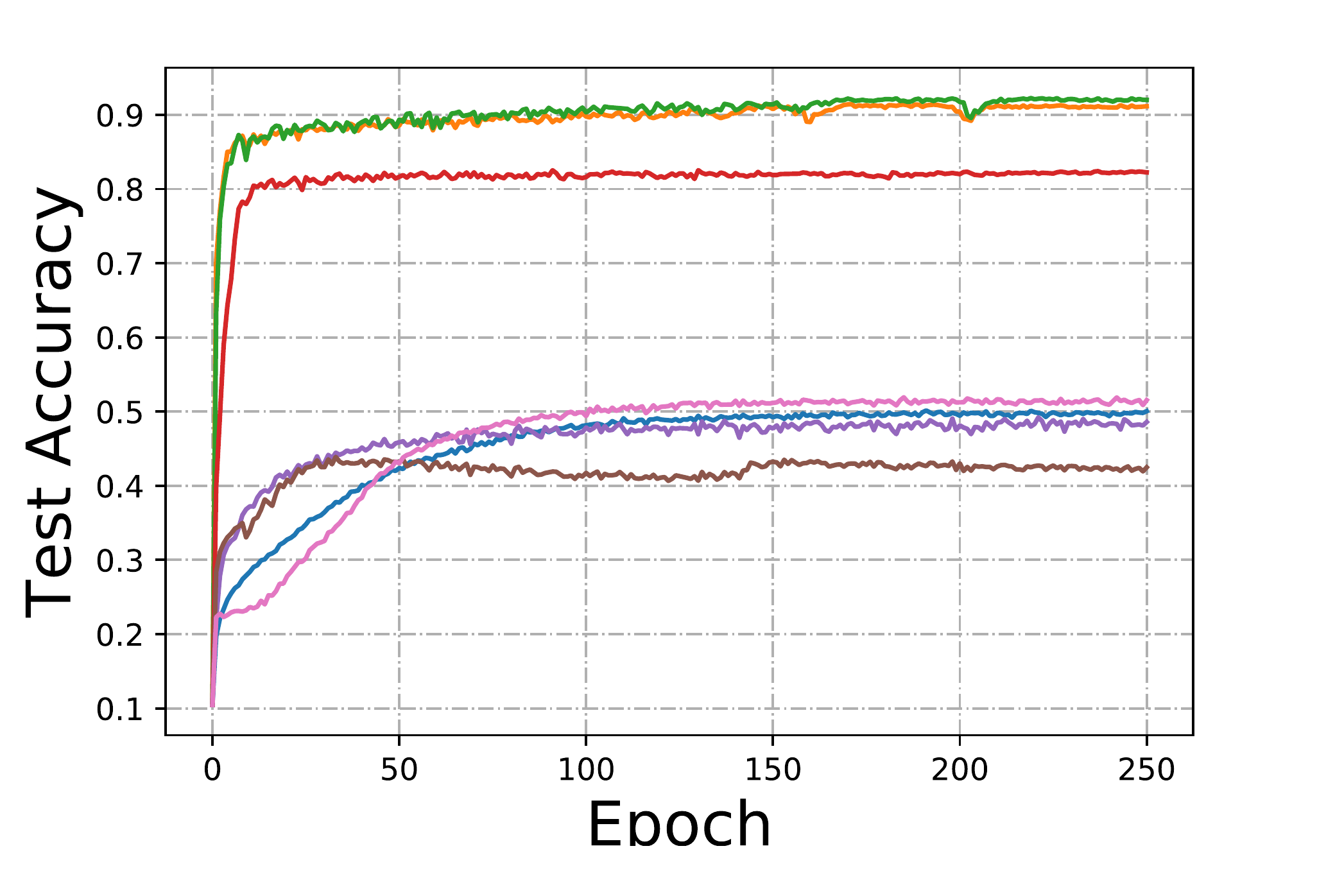}
	}
	\caption{Study of the leverage parameter $\beta$ for LW loss.}
	\label{fig::beta}
\end{figure}

As is shown in Figure \ref{fig::beta}, on all four datasets with varying data generation probability $q$, LW losses with $\beta = 1$ and $\beta=2$ significantly outperform those with other parameter settings. 
(On MNIST, LW loss with $\beta=4$ also performs competitively.)
This coincides exactly with the theoretical guidance to the choice of $\beta$ discussed in Section \ref{sec::thm_interpret}.

\subsubsection{Ablation Study}

In this part, we conduct an ablation study on effect of the two parts in our proposed LW loss, i.e. losses on partial labels $\sum_{z\in\vec{y}} w_z \psi(g_z(x))$ and those on non-partial ones $\sum_{z\notin\vec{y}} w_z \psi(-g_z(x))$. 
For notational simplicity, we rewrite the ``generalized'' LW loss as 
\begin{align*}
	\alpha \cdot \sum_{z\in\vec{y}} w_z \psi(g_z(x)) + \beta \cdot \sum_{z\notin\vec{y}} w_z \psi(-g_z(x)).
\end{align*} 
We compare among performances of LW with \\
1) losses on partial labels only ($\beta = 0$), \\
2) losses on non-partial labels only ($\alpha = 0$), \\
3) losses on both partial and non-partial labels ($\alpha \beta \neq 0$).
We employ the Sigmoid loss function for the LW loss.
Other experimental settings are similar to Section \ref{sec::simu}.

\begin{figure}[!h]
	\centering
	\subfigure[MNIST, $q=0.1$.]{
		\centering
		\includegraphics[width = 0.22\textwidth, trim= 0 200 0 240, clip]{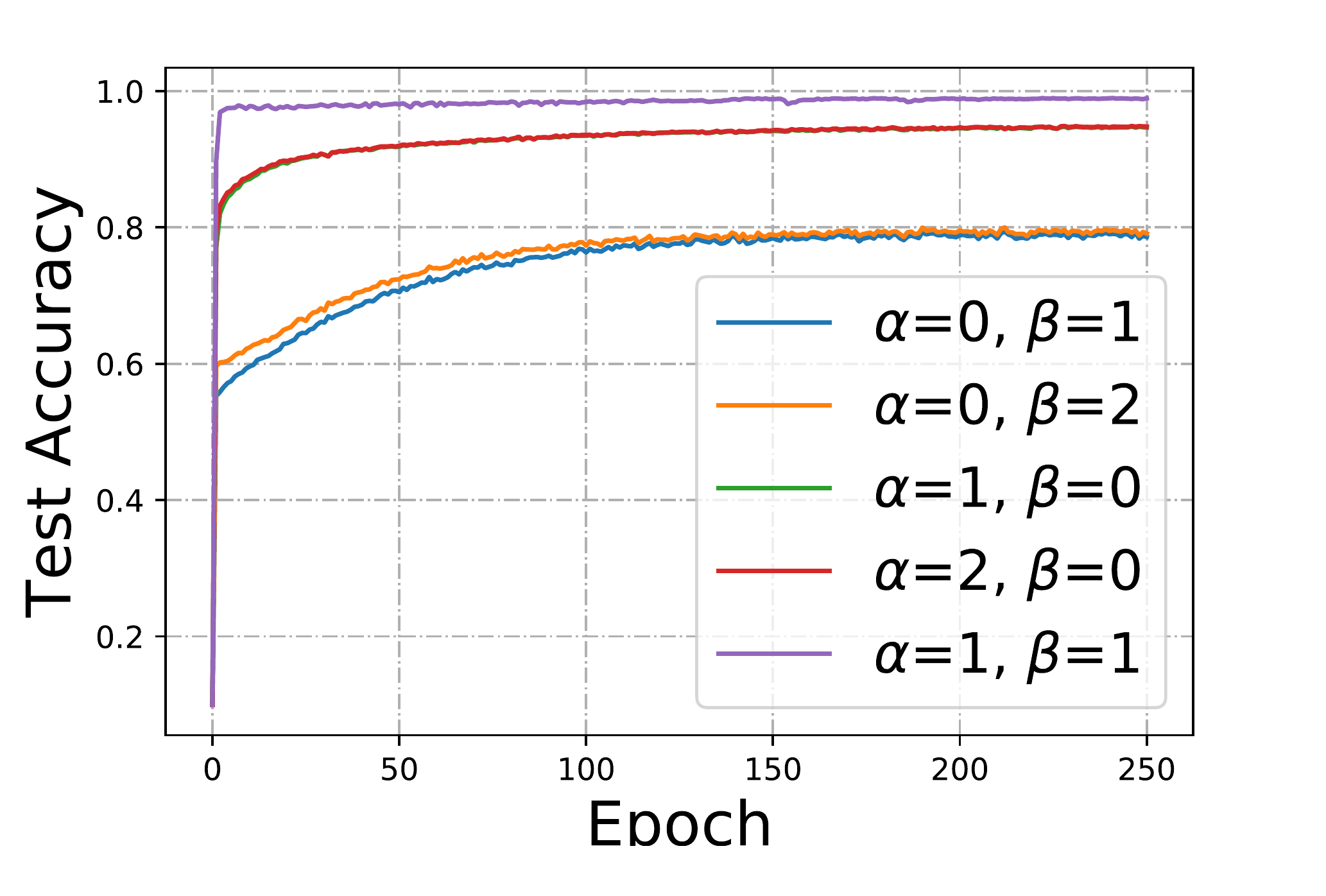}
	}
	\subfigure[Fashion-MNIST, $q=0.3$.]{
		\centering
		\includegraphics[width = 0.22\textwidth, trim= 0 200 0 240, clip]{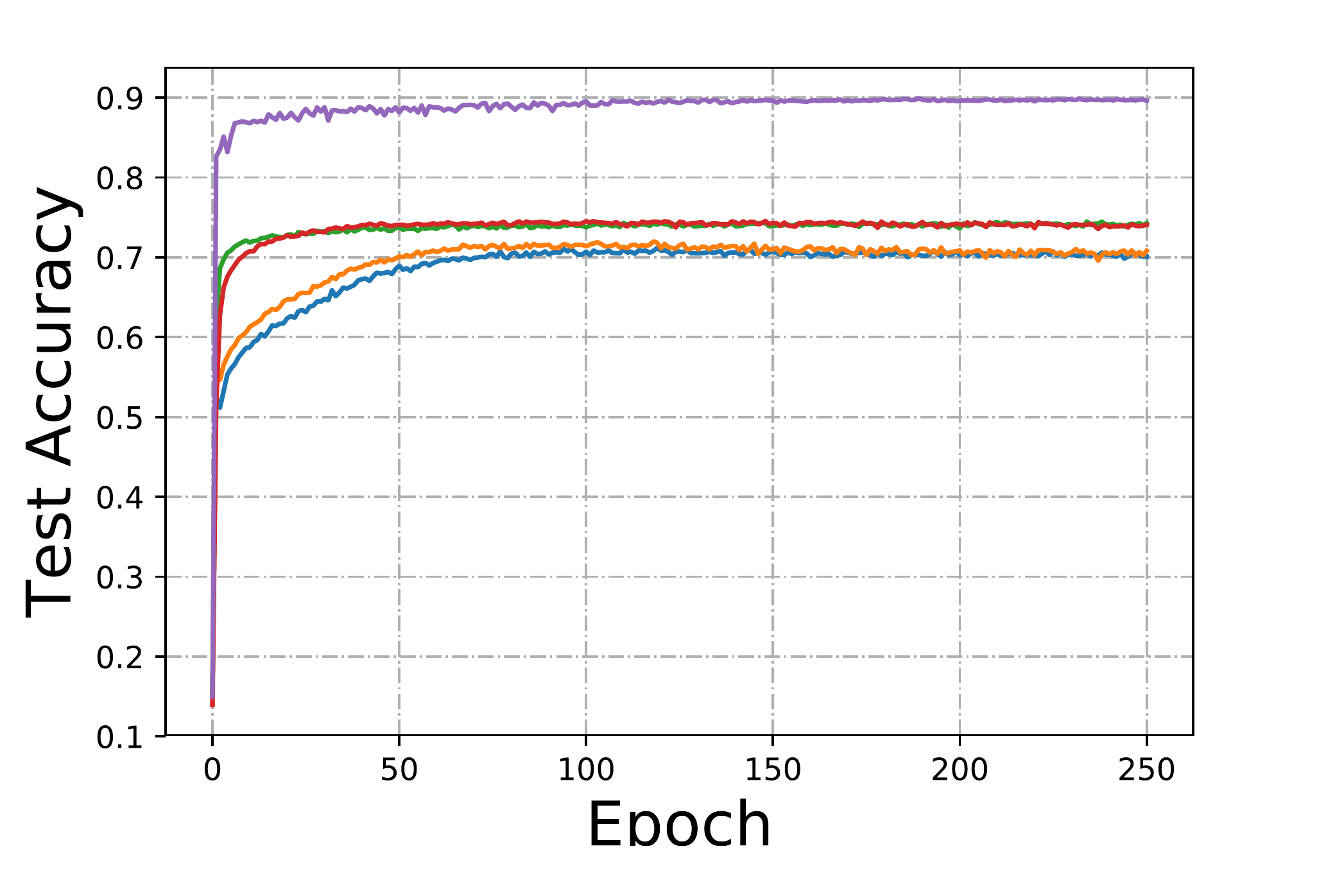}
	}
	\\
	\subfigure[CIFAR-10, $q=0.3$.]{
		\centering
		\includegraphics[width = 0.22\textwidth, trim= 0 200 0 240, clip]{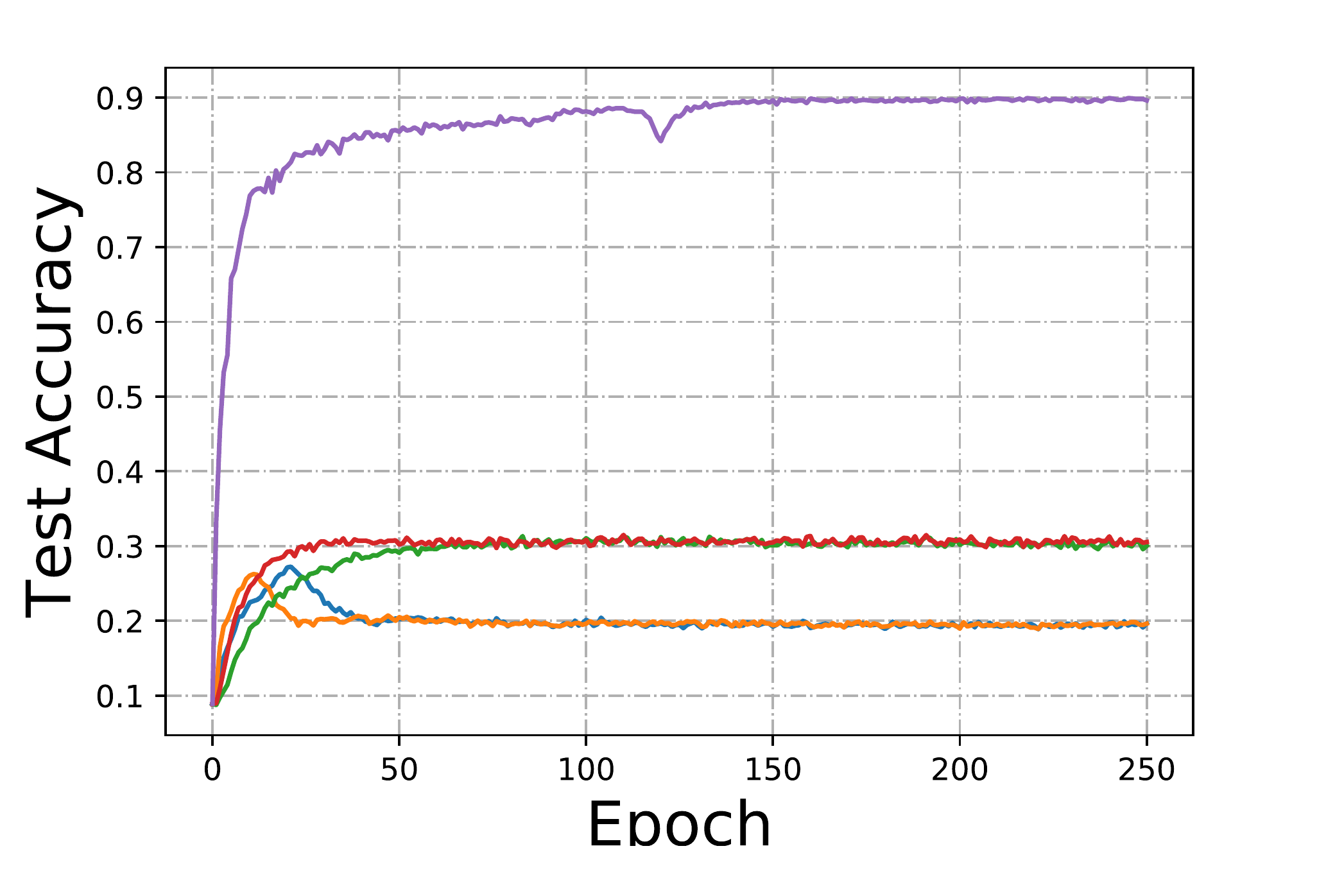}
	}
	\subfigure[Kuzushiji-MNIST, $q=0.5$.]{
		\centering
		\includegraphics[width = 0.22\textwidth, trim= 0 200 0 240, clip]{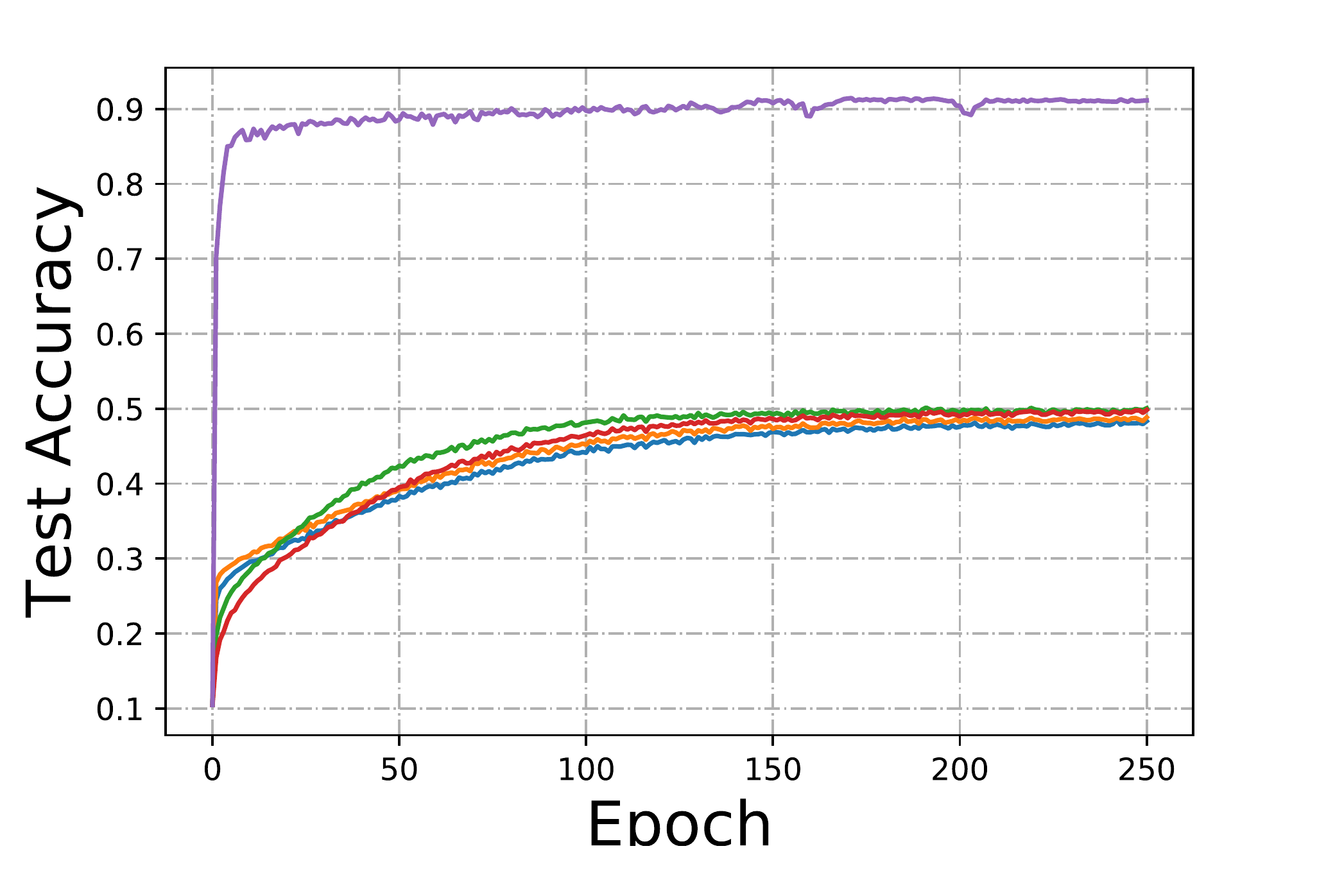}
	}
	\caption{Ablation study: comparisons between LW loss with losses on partial or non-partial labels.}
	\label{fig::ablation}
\end{figure}

As is shown in Figure \ref{fig::ablation}, when individually using losses on either partial labels or non-partial ones, the accuracy results is far from satisfactory on all three datasets since the information contained in the other half is neglected. 
Besides, it provides little help to the empirical performance by simply scaling the losses themselves.
On the contrary, by combining losses on both partial labels and non-partial ones ($\alpha = \beta = 1$), our proposed LW loss function shows its superiority in empirical performances,
where results show that our idea is especially effective on CIFAR-10 and Kuzushiji-MNIST datasets.

\subsubsection{The Influence of Data Generation}\label{sec::datagenerate}

In the data generation of previous subsections, the untrue partial labels are selected with equal probabilities, i.e. $q_z = q$ for $z \neq y$.
In reality, however, some labels may be more analogous to the true label than others, and thus the probabilities $q_z$ for these labels may naturally be higher than others.
In this part, we conduct empirical comparisons on data with alternative generation process.
To be specific, Case 1 describes a ``pairwise'' partial label set, where there exists only one potential partial label for each class. 
In Case 2, we assume two potential partial labels for each class.
Case 3 considers a more complex situation where $6$ potential labels have different probabilities to enter the partial label set.
More details about the data generations are shown in Section \ref{sec::Amatrix} in the supplementary material.
Other experimental settings are similar to Section \ref{sec::simu}.

As is shown in Table \ref{tab::caseCompare}, our proposed method dominates its counterparts in all three cases. 
Moreover, as the data generation process becomes more complex (from Case 1 to Case 3), there is a natural drop in accuracy for all methods.
Nonetheless, our LW-Cross entropy shows stronger resistance.
For example, on Kuzushiji-MNIST, in Case 1 the accuracy of our LW-Cross entropy is $0.71\%$ higher than PRODEN, while in Case 3 the difference increases to $1.25\%$.

\section{Conclusion}

In this paper, we propose a family of loss functions, named \textit{Leveraged Weighted} (LW) loss function, to address the problem of learning with partial labels. 
On the one hand, we provide theoretical guidance to the empirical choice of the leverage parameter $\beta$ proposed in our LW loss from the perspective of risk consistency.
Both theoretical interpretations and empirical understandings show that $\beta=1$ and $\beta=2$ are preferred parameter settings.
On the other hand, we design a practical algorithmic implementation of our LW loss, where its experimental comparisons with other state-of-the-art algorithms on both benchmark and real datasets demonstrate the effectiveness of our proposed method.

\section*{Acknowledgements}
Yisen Wang is supported by the National Natural Science Foundation of China under Grant No. 62006153, CCF-Baidu Open Fund (No. OF2020002), and Project 2020BD006 supported by PKU-Baidu Fund. Zhouchen Lin is supported by the National Natural Science Foundation of China (Grant No.s 61625301 and 61731018), Project 2020BD006 supported by PKU-Baidu Fund, Major Scientific Research Project of Zhejiang Lab (Grant No.s 2019KB0AC01 and 2019KB0AB02), and Beijing Academy of Artificial Intelligence.

\bibliographystyle{icml2021}
\small{\bibliography{partial}}

\clearpage
\section*{Appendix}

\appendix

This file consists of supplementaries for both theoretical analysis and experiments. 
In Section \ref{sec::Aproofs}, we present the proof of Theorem \ref{thm::weight} in Section \ref{sec::methodology}.
In Section \ref{sec::Aexp}, we present more detailed settings of the numerical experiments including descriptions of datasets, compared methods, model architecture, and data generation procedures. 

\section{Proofs}\label{sec::Aproofs}
We present all proofs for Section \ref{sec::methodology} here.
For the sake of conciseness and readability, we denote 
$\vec{\mathcal{Y}}^y$ as the collection of all partial label sets containing the true label $y$, i.e. $\vec{\mathcal{Y}}^y := \{\vec{y} \in \vec{\mathcal{Y}}| y \in \vec{y}\}$.


In order to achieve the risk consistency result for the LW loss in Theorem \ref{thm::weight}, we first present in Theorem \ref{thm::partialrisk} the risk consistency result for an arbitrary loss function $\bar{\mathcal{L}}(\vec{y}, g(x))$ under the generalized assumption that partial label sets follows the \textit{label-specific} sampling.

\begin{theorem}\label{thm::partialrisk}
	Denote $q_z := \mathrm{P}(z \in \vec{y} \,|\, Y=y,x)$. Then the partial loss function $\bar{\mathcal{L}}(\vec{y}, g(x))$ is risk-consistent with respect to the supervised loss function with the form
	\begin{align}
		\mathcal{L}(y, g(x)) = \sum_{\vec{y} \in \vec{\mathcal{Y}}^y} \prod_{s \in \vec{y}, s \neq y} q_s \prod_{t \notin \vec{y}} (1-q_t) \bar{\mathcal{L}}(\vec{y}, g(x)),
	\end{align}
	where $\vec{\mathcal{Y}}^y := \{\vec{y} \in \vec{\mathcal{Y}} \,|\, y \in \vec{y}\}$ denotes the partial label set containing label $y$.
\end{theorem}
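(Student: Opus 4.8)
The plan is to unwind the definition of risk consistency directly through the label-specific generative model, using iterated expectations. Since $\bar{\mathcal{L}} \ge 0$, every expectation below is well defined (possibly $+\infty$) and Tonelli lets us reorder sums and integrals freely. Conditioning on the ground-truth label $Y$ and the input $X$, I would write
\begin{align*}
\bar{\mathcal{R}}(\bar{\mathcal{L}}, g) = \mathbb{E}_{(X,\vec{Y})}\big[\bar{\mathcal{L}}(\vec{Y}, g(X))\big] = \mathbb{E}_{X}\,\mathbb{E}_{Y \mid X}\,\mathbb{E}_{\vec{Y}\mid Y, X}\big[\bar{\mathcal{L}}(\vec{Y}, g(X))\big],
\end{align*}
where the innermost expectation is taken with respect to the label-specific sampling of the partial label set. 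The whole proof then reduces to evaluating that innermost conditional expectation in closed form.

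The key step is this evaluation. By the basic partial-label assumption $\mathrm{P}(y \in \vec{Y}\mid Y=y, x) = 1$, the conditional law of $\vec{Y}$ given $Y = y$ is supported on $\vec{\mathcal{Y}}^y$, and on that support the independent, label-specific generation gives $\mathrm{P}(\vec{Y} = \vec{y}\mid Y = y, x) = \prod_{s \in \vec{y},\, s \neq y} q_s \prod_{t \notin \vec{y}} (1 - q_t)$. Hence, pointwise in $x$ and for each true label $y$,
\begin{align*}
\mathbb{E}_{\vec{Y}\mid Y=y, X=x}\big[\bar{\mathcal{L}}(\vec{Y}, g(x))\big] = \sum_{\vec{y} \in \vec{\mathcal{Y}}^y} \prod_{s \in \vec{y},\, s \neq y} q_s \prod_{t \notin \vec{y}} (1 - q_t)\,\bar{\mathcal{L}}(\vec{y}, g(x)) = \mathcal{L}(y, g(x)),
\end{align*}
which is exactly the supervised loss claimed in the statement. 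Substituting back and collapsing the two outer expectations into the joint expectation over $(X, Y)$ yields $\bar{\mathcal{R}}(\bar{\mathcal{L}}, g) = \mathbb{E}_{(X,Y)}[\mathcal{L}(Y, g(X))] = \mathcal{R}(\mathcal{L}, g)$, i.e. $\bar{\mathcal{L}}$ is risk-consistent to $\mathcal{L}$.

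There is no real analytic difficulty here; the "hard part" is just bookkeeping and making one modeling convention explicit. Specifically, the step collapsing $\mathbb{E}_X\,\mathbb{E}_{Y\mid X}$ back into $\mathbb{E}_{(X,Y)}$ uses that the partial-label data and the fully supervised data share the same marginal $\mathrm{P}(X)$ and the same class-posterior $\mathrm{P}(Y\mid X)$ — that is, $\vec{Y}$ depends on $(X,Y)$ only through the label-specific mechanism — and that $q_z$ is allowed to depend on $x$, so the identity above must be read pointwise in $x$ before integrating. Once this is flagged, everything else is the definitional chain above.

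Finally, Theorem \ref{thm::weight} should drop out by specializing $\bar{\mathcal{L}}$ to the LW loss $\bar{\mathcal{L}}_\psi$ of \eqref{eq::partial_loss} and simplifying the subset sum: the contribution of each coordinate $z \neq y$ factors through the product measure, so summing over whether $z$ enters $\vec{y}$ turns the $\psi(g_z(x))$ term into weight $w_z q_z$ and the $\beta\,\psi(-g_z(x))$ term into weight $\beta w_z q_z$, while the $z = y$ coordinate always contributes $w_y \psi(g_y(x))$ — but that computation belongs to the proof of Theorem \ref{thm::weight}, not to this lemma.
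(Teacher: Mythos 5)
Your proposal is correct and follows essentially the same route as the paper: both arguments reduce the partial risk to the conditional expectation $\mathbb{E}[\bar{\mathcal{L}}(\vec{Y},g(x))\,|\,Y=y,X=x]$ (the paper by expanding $\mathrm{P}(\vec{Y}=\vec{y}\,|\,X=x)$ via the law of total probability and swapping sums, you by the tower property directly), and then identify that conditional expectation with $\mathcal{L}(y,g(x))$ using the support condition $\mathrm{P}(y\in\vec{Y}\,|\,Y=y,x)=1$ and the product form of the label-specific sampling. Your explicit remarks on Tonelli and on the shared marginal $\mathrm{P}(X)$ and posterior $\mathrm{P}(Y\,|\,X)$ are reasonable clarifications the paper leaves implicit, but they do not change the substance of the argument.
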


\begin{proof}[of Theorem \ref{thm::partialrisk}]
	For any $x \in \mathcal{X}$, there holds
	\begin{align*}
		&\ \phantom{=}\bar{\mathcal{R}}(\bar{\mathcal{L}}, g(X)) 
		\\
		&= \mathbb{E}_{\vec{Y}|X}[\bar{\mathcal{L}}(\vec{Y}, g(x))|X=x]
		\\
		&= \sum_{\vec{y} \in 2^{[K]}} \bar{\mathcal{L}}(\vec{y}, g(x)) \mathrm{P}(\vec{Y}=\vec{y}|X=x)
		\\
		&= \sum_{\vec{y} \in 2^{[K]}} \bar{\mathcal{L}}(\vec{y}, g(x)) 
		\sum_{y\in\textbf{y}} \mathrm{P}(\vec{Y}=\vec{y}, Y= y|X=x)
		\\
		&= \sum_{\vec{y} \in 2^{[K]}} \bar{\mathcal{L}}(\vec{y}, g(x)) 
		\\
		&\ \phantom{=}\cdot \sum_{y\in\textbf{y}} \mathrm{P}(\vec{Y}=\vec{y} | Y= y, X=x) \mathrm{P}(Y=y|X=x)
		\\
		&= \sum_{y=1}^K \mathrm{P}(Y=y|X=x) 
		\\
		&\cdot \sum_{\vec{y} \in 2^{[K]}} \mathrm{P}(\vec{Y}=\vec{y}| Y= y, X=x) \bar{\mathcal{L}}(\vec{y}, g(x)),
	\end{align*}
	and
	\begin{align*}
		\mathcal{R}(\mathcal{L}, g(X)) 
		&= \mathbb{E}_{Y|X}[\mathcal{L}(Y, g(x))|X=x] 
		\\
		&= \sum_{y=1}^K \mathcal{L}(y, g(x))\mathrm{P}(Y=y|X=x).
	\end{align*}
	Since $\mathrm{P}(\vec{Y}=\vec{y}| Y= y, X=x) = 0$ for $\vec{y}$ not containing $y$, if we have
	\begin{align*}
		\mathcal{L}(y, g(x)) 
		&= \sum_{\vec{y} \in 2^{[K]}} \mathrm{P}(\vec{Y}=\vec{y}| Y= y, X=x) \bar{\mathcal{L}}(\vec{y}, g(x))
		\\	
		&= \sum_{\vec{y} \in \vec{\mathcal{Y}}^y} \mathrm{P}(\vec{Y}=\vec{y}| Y= y, X=x) \bar{\mathcal{L}}(\vec{y}, g(x)),
		\\
		&=\sum_{\vec{y} \in \vec{\mathcal{Y}}^y} \prod_{s \in \vec{y}, s \neq y} q_s \prod_{t \notin \vec{y}} (1-q_t) \bar{\mathcal{L}}(\vec{y}, g(x)),
	\end{align*}
	then there holds
	\begin{align*}
		\bar{\mathcal{R}}(\bar{\mathcal{L}}, g(X)) = \mathcal{R}(\mathcal{L}, g(X)).
	\end{align*}
\end{proof}

Besides, to prove Theorem \ref{thm::weight}, we need the following result shown in Lemma \ref{lem::prob_sum}.

\begin{lemma}\label{lem::prob_sum}
	Let $y$ be the true label of input $x$, $q_z := \mathrm{P}(z \in \vec{y} | Y=y, X=x)$ for $z \in \mathcal{Y}$, and $\vec{\mathcal{Y}}^y := \{\vec{y} \in \vec{\mathcal{Y}} | y \in \vec{y}\}$.
	Then there holds
	\begin{align*}
		\sum_{\vec{y} \in \vec{\mathcal{Y}}^y} \prod_{s \in \vec{y}, s \neq y} q_s \prod_{t \notin \vec{y}} (1-q_t) = 1.
	\end{align*}
\end{lemma}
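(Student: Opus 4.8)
The plan is to recognize the left-hand side as the total probability of all partial label sets that can possibly be sampled given the true label $y$. Recall that the label-specific generation procedure draws, independently for each $z \neq y$, whether $z$ enters $\vec{y}$ with probability $q_z$, while $y$ is always included with probability $q_y = 1$. Hence for a fixed true label $y$, the collection $\{\vec{y} \in \vec{\mathcal{Y}} \mid y \in \vec{y}\} = \vec{\mathcal{Y}}^y$ is exactly the sample space of this product of independent Bernoulli experiments, and $\prod_{s \in \vec{y}, s \neq y} q_s \prod_{t \notin \vec{y}} (1-q_t)$ is the probability assigned to the outcome $\vec{y}$. So the claim is simply that a probability distribution sums to $1$; the work is to make this factorization explicit.

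The clean way to carry this out is to re-index the sum over subsets $\vec{y} \in \vec{\mathcal{Y}}^y$ by subsets $S \subseteq [K] \setminus \{y\}$ via $\vec{y} = S \cup \{y\}$, which is a bijection. Then the sum becomes
\begin{align*}
	\sum_{S \subseteq [K] \setminus \{y\}} \ \prod_{s \in S} q_s \prod_{t \in ([K]\setminus\{y\}) \setminus S} (1-q_t).
\end{align*}
Now I would invoke the standard identity that for any finite index set $I$ and any reals $\{a_i, b_i\}_{i \in I}$, one has $\sum_{S \subseteq I} \prod_{s \in S} a_s \prod_{t \in I \setminus S} b_t = \prod_{i \in I}(a_i + b_i)$, which follows by expanding the product on the right. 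Applying this with $I = [K] \setminus \{y\}$, $a_z = q_z$, $b_z = 1 - q_z$ gives $\prod_{z \neq y}(q_z + (1-q_z)) = \prod_{z \neq y} 1 = 1$, which is the desired conclusion. Alternatively, one can prove the bracketed identity by a short induction on $|I|$ (peeling off one element of $I$ and using $q_z + (1-q_z) = 1$ at each step), which avoids quoting the binomial-type expansion.

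I do not anticipate a genuine obstacle here — the statement is essentially a bookkeeping lemma. The only point requiring a little care is the bijection between $\vec{\mathcal{Y}}^y$ and the power set of $[K]\setminus\{y\}$, and making sure the exponent sets in the product match up correctly after the re-indexing (in particular that $t \notin \vec{y}$ with $\vec{y} = S \cup \{y\}$ corresponds exactly to $t \in ([K]\setminus\{y\})\setminus S$, since $y$ itself is never among the $t$'s). Once that identification is in place, the factorization of the sum into a product over singletons is immediate, and each factor collapses to $1$.
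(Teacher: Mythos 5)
Your proof is correct. It does, however, take a different route from the paper's. The paper's argument is probabilistic: since $q_y = 1$, it rewrites the summand as $\prod_{s \in \vec{y}} q_s \prod_{t \notin \vec{y}} (1-q_t) = \mathrm{P}(\vec{Y}=\vec{y} \mid Y=y, X=x)$, extends the sum from $\vec{\mathcal{Y}}^y$ to all of $\vec{\mathcal{Y}}$ (the added terms vanish because $\mathrm{P}(\vec{Y}=\vec{y}\mid Y=y,x)=0$ when $y \notin \vec{y}$), and concludes by normalization of the conditional distribution of $\vec{Y}$. You instead re-index over $S \subseteq [K]\setminus\{y\}$ and invoke the algebraic identity $\sum_{S \subseteq I}\prod_{s\in S}a_s\prod_{t\in I\setminus S}b_t = \prod_{i\in I}(a_i+b_i)$ with $a_z = q_z$, $b_z = 1-q_z$. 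Your version is more self-contained and slightly more general: it is a pure identity in the $q_z$'s and does not presuppose that the displayed product formula actually defines a normalized probability mass function on $2^{[K]}$ (which in the paper is guaranteed by the independence assumption in the generation procedure, itself verified by essentially the same expansion). The paper's version is shorter and makes the probabilistic meaning of the quantity transparent, which is then reused in the proof of Theorem \ref{thm::weight}. Both are complete; your bookkeeping of the bijection $\vec{y} = S \cup \{y\}$ and of the complement sets is handled correctly.
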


\begin{proof}[of Lemma \ref{lem::prob_sum}]
	Since $q_y=1$, we have
	\begin{align*}
		&\ \phantom{=}\sum_{\vec{y} \in \vec{\mathcal{Y}}^y} \prod_{s \in \vec{y}, s \neq y} q_s \prod_{t \notin \vec{y}} (1-q_t)
		\\
		&= \sum_{\vec{y} \in \vec{\mathcal{Y}}^y} \prod_{s \in \vec{y}, s \neq y} 1 \cdot q_s \prod_{t \notin \vec{y}} (1-q_t)
		\\
		&= \sum_{\vec{y} \in \vec{\mathcal{Y}}^y} \prod_{s \in \vec{y}, s \neq y} q_y \cdot q_s \prod_{t \notin \vec{y}} (1-q_t)
		\\
		&= \sum_{\vec{y} \in \vec{\mathcal{Y}}^y} \prod_{s \in \vec{y}} q_s \prod_{t \notin \vec{y}} (1-q_t)
		\\
		&= \sum_{\vec{y} \in \vec{\mathcal{Y}}^y} \mathrm{P}(\vec{Y} = \vec{y} | Y=y, X=x)
		\\
		&= \sum_{\vec{y} \in \vec{\mathcal{Y}}} \mathrm{P}(\vec{Y} = \vec{y} | Y=y, X=x)
		\\
		&= 1,
	\end{align*}
	where the second last equation holds since $\mathrm{P}(\vec{Y} = \vec{y} | Y=y, X=x) = 0$ for $\vec{y} \notin \vec{\mathcal{Y}}^y$.
	
\end{proof}

\begin{proof}[of Theorem \ref{thm::weight}]
	According to Theorem \ref{thm::partialrisk}, we have the partial loss function $\bar{\mathcal{L}}_{\psi}$ consistent with
	\begin{align}\label{eq::weight}
		&\phantom {=} \mathcal{L}_{\psi}(y,g(x)) 
		\nonumber\\
		&= \sum_{\vec{y} \in \vec{\mathcal{Y}}^y} \prod_{s \in \vec{y}, s \neq y} q_s \prod_{t \notin \vec{y}} (1-q_t) \bar{\mathcal{L}}_{\psi}(\vec{y}, g(x))
		\nonumber\\
		&= \sum_{\vec{y} \in \vec{\mathcal{Y}}^y} \prod_{s \in \vec{y}, s \neq y} q_s \prod_{t \notin \vec{y}} (1-q_t) \sum_{z \in \vec{y}} w_z \psi(g_z(x)) 
		\nonumber\\
		&+ \beta \cdot \sum_{\vec{y} \in \vec{\mathcal{Y}}^y} \prod_{s \in \vec{y}, s \neq y} q_s \prod_{t \notin \vec{y}} (1-q_t) \sum_{z \notin \vec{y}} w_z \psi(-g_z(x)).
	\end{align}
	
	The first term on the right hand side of \eqref{eq::weight} is
	\begin{align}\label{eq::first_term}
		&\phantom{=} \, \sum_{\vec{y} \in \vec{\mathcal{Y}}^y} \prod_{s \in \vec{y}, s \neq y} q_s \prod_{t \notin \vec{y}} (1-q_t) \sum_{z \in \vec{y}} w_z \psi(g_z(x))
		\nonumber\\
		&= \sum_{\vec{y} \in \vec{\mathcal{Y}}^y} \prod_{s \in \vec{y}, s \neq y} q_s \prod_{t \notin \vec{y}} (1-q_t) w_y \psi(g_y(x)) 
		\nonumber\\
		&+ \sum_{\vec{y} \in \vec{\mathcal{Y}}^y} \prod_{s \in \vec{y}, s \neq y} q_s \prod_{t \notin \vec{y}} (1-q_t) \sum_{z \in \vec{y}\setminus \{y\}} w_z \psi(g_z(x))
		\nonumber\\
		&= \sum_{\vec{y} \in \vec{\mathcal{Y}}^y} \prod_{s \in \vec{y}, s \neq y} q_s \prod_{t \notin \vec{y}} (1-q_t) w_y \psi(g_y(x)) 
		\nonumber\\
		&+ \sum_{\vec{y} \in \vec{\mathcal{Y}}^y} \sum_{z \in \vec{y}\setminus \{y\}} \prod_{s \in \vec{y}, s \neq y} q_s \prod_{t \in [K]\setminus \vec{y}} (1-q_t)  w_z \psi(g_z(x)).
	\end{align}
	
	By Lemma \ref{lem::prob_sum}, we have
	\begin{align*}
		\sum_{\vec{y} \in \vec{\mathcal{Y}}^y} \prod_{s \in \vec{y}, s \neq y} q_s \prod_{t \notin \vec{y}} (1-q_t) = 1,
	\end{align*}
	and therefore the first term in \eqref{eq::first_term} becomes
	\begin{align}\label{eq::first_first}
		\sum_{\vec{y} \in \vec{\mathcal{Y}}^y} \prod_{s \in \vec{y}, s \neq y} q_s \prod_{t \notin \vec{y}} (1-q_t) w_y \psi(g_y(x)) 
		&= w_y \psi(g_y(x)).
	\end{align}
	
	For the second term in \eqref{eq::first_term},
	since $z \neq y$ and $z \in \vec{y}$, we switch the summations, and achieve
	\begin{align*}
		&\phantom{=} \, \sum_{\vec{y} \in 2^{[K]}} \sum_{z \in \vec{y}\setminus \{y\}} \prod_{s \in \vec{y}, s \neq y} q_s \prod_{t \in [K] \setminus \vec{y}} (1-q_t)  w_z \psi(g_z(x))
		\\
		&= \sum_{z \neq y} \sum_{\vec{y} \in \vec{\mathcal{Y}}^z \cap \vec{\mathcal{Y}}^y} \prod_{s \in \vec{y}, s \neq y} q_s \prod_{t \in [K] \setminus \vec{y}} (1-q_t)  w_z \psi(g_z(x))
		\\
		&= \sum_{z \neq y} w_z \psi(g_z(x)) \sum_{\vec{y} \in \vec{\mathcal{Y}}^z \setminus \vec{\mathcal{Y}}^y} \prod_{s \in \vec{y}} q_s \prod_{t \in [K] \setminus \vec{y} \setminus \{y\}} (1-q_t).
	\end{align*}
	
	Without loss of generality, we assume $y = K$ for notational simplicity, and write
	\begin{align*}
		&\phantom{=} \, \sum_{z \neq y} w_z \psi(g_z(x)) \sum_{\vec{y} \in \vec{\mathcal{Y}}^z \setminus \vec{\mathcal{Y}}^y} \prod_{s \in \vec{y}} q_s \prod_{t \in [K] \setminus \vec{y} \setminus \{y\}} (1-q_t)
		\\
		&= \sum_{z \in [K-1]} w_z \psi(g_z(x)) 
		\sum_{\vec{y} \in (2^{[K-1]})^z} \prod_{s \in \vec{y}} q_s \prod_{t \in [K-1] \setminus \vec{y}} (1-q_t)
		\\
		&= \sum_{z \in [K-1]} w_z \psi(g_z(x)) q_z 
		\\
		&\cdot \sum_{\vec{y} \in (2^{[K-1]})^z} \prod_{s \in \vec{y}, s\neq z} q_s \prod_{t \in [K-1] \setminus \vec{y}} (1-q_t).
	\end{align*}
	
	Applying Lemma \ref{lem::prob_sum} with $\vec{\mathcal{Y}} = 2^{[K-1]}$, we have 
	\begin{align}\label{eq::sum_prob_k-1}
		\sum_{\vec{y} \in (2^{[K-1]})^z} \prod_{s \in \vec{y}, s\neq z} q_s \prod_{t \in [K-1] \setminus \vec{y}} (1-q_t) = 1,
	\end{align}
	and therefore the second term in \eqref{eq::first_term} becomes
	\begin{align}\label{eq::first_second}
		&\phantom{=}\sum_{\vec{y} \in 2^{[K]}} \sum_{z \in \vec{y}\setminus \{y\}} \prod_{s \in \vec{y}, s \neq y} q_s \prod_{t \in [K] \setminus \vec{y}} (1-q_t)  w_z \psi(g_z(x))
		\nonumber\\
		&= \sum_{z \neq y} q_z w_z \psi(g_z(x)).
	\end{align}
	
	Similarly, by switching the summations, the second term on the right hand side of \eqref{eq::weight} becomes
	\begin{align}\label{eq::second}
		&\phantom{=} \, \beta \cdot \sum_{\vec{y} \in \vec{\mathcal{Y}}^y} \prod_{s \in \vec{y}, s \neq y} q_s \prod_{t \notin \vec{y}} (1-q_t) \sum_{z \notin \vec{y}} w_z \psi(-g_z(x))
		\nonumber\\
		&= \beta \cdot \sum_{\vec{y} \in \vec{\mathcal{Y}}^y} \sum_{z \notin \vec{y}} \prod_{s \in \vec{y}, s \neq y} q_s \prod_{t \notin \vec{y}} (1-q_t) w_z \psi(-g_z(x))
		\nonumber\\
		&= \beta \cdot \sum_{z \neq y} \sum_{\vec{y} \in \vec{\mathcal{Y}}^y \cap \vec{\mathcal{Y}}^z}  \prod_{s \in \vec{y}, s \neq y} q_s \prod_{t \notin \vec{y}} (1-q_t) w_z \psi(-g_z(x))
		\nonumber\\
		&= \beta \cdot \sum_{z \neq y} \sum_{\vec{y} \in \vec{\mathcal{Y}}^z \setminus \vec{\mathcal{Y}}^y}  \prod_{s \in \vec{y}} q_s \prod_{t \notin \vec{y}, t\neq y} (1-q_t) w_z \psi(-g_z(x))
		\nonumber\\
		&= \beta \cdot \sum_{z \neq y} \sum_{\vec{y} \in (2^{[K-1]})^z} \prod_{s \in \vec{y}} q_s \prod_{t \notin \vec{y}} (1-q_t) w_z \psi(-g_z(x))
		\nonumber\\
		&= \beta \cdot \sum_{z \neq y} q_z \sum_{\vec{y} \in (2^{[K-1]})^z} \prod_{s \in \vec{y}, s \neq z} q_s \prod_{t \notin \vec{y}} (1-q_t) w_z \psi(-g_z(x))
		\nonumber\\
		&= \beta \cdot \sum_{z \neq y} q_z w_z \psi(-g_z(x)),
	\end{align}
	where the last equality holds according to \eqref{eq::sum_prob_k-1}.

	By combining \eqref{eq::first_first}, \eqref{eq::first_second}, \eqref{eq::second}, we have
	\begin{align*}
		\mathcal{L}_{\psi}(y,g(x)) &= w_y \psi(g_y(x)) 
		+ \sum_{z \neq y} q_z w_z \psi(g_z(x)) 
		\\
		&+ \beta \cdot \sum_{z \neq y} q_z w_z \psi(-g_z(x))
		\\
		&= w_y \psi(g_y(x)) 
		\\
		&+ \sum_{z \neq y} q_z w_z \big[\psi(g_z(x)) + \beta\psi(-g_z(x))\big].
	\end{align*}
	%
	
\end{proof}

Before proving Theorem \ref{thm::calibration}, we define the inner risk of loss function $\mathcal{L}_{\psi}$ by
\begin{align*}
	\mathcal{C}_{\mathcal{L}_\psi}(g) := \mathbb{E}_{Y|X} \mathcal{L}_{\psi}(Y, g(X)) = \sum_{y \in [K]} p_y \mathcal{L}_{\psi}(Y, g(X)),
\end{align*}
where $p_y := \mathrm{P}(Y=y \,| x)$.

\begin{proof}[of Theorem \ref{thm::calibration}]
	By Theorem \ref{thm::weight}, we can write the inner risk induced by the supervised loss $\mathcal{L}_{\psi}$ as 
	\begin{align*}
		\mathcal{C}_{\mathcal{L}_\psi}(g) &:=\mathbb{E}_{Y|X} \mathcal{L}_{\psi}(Y, g(X))
		\\
		&= \sum_{y \in [K]} p_y \Big(w_y q_y \psi(g_y(x)) 
		\\
		&\quad  + \sum_{z \neq y} w_z q_z \big[\psi(g_z(x)) + \beta \psi(-g_z(x))\big]\Big)
		\\
		&= \sum_{y \in [K]} p_y w_y q_y \psi(g_y(x)) 
		\\
		&\quad  + \sum_{y \in [K]} p_y \sum_{z \neq y} w_z q_z \big[\psi(g_z(x)) + \beta \psi(-g_z(x))\big]
		\\
		&= \sum_{y \in [K]} p_y w_y q_y \psi(g_y(x)) 
		\\
		&\quad  + \sum_{z \in [K]} \sum_{y \neq z} p_y w_z q_z \big[\psi(g_z(x)) + \beta \psi(-g_z(x))\big]
		\\
		&= \sum_{y \in [K]} p_y w_y q_y \psi(g_y(x)) 
		\\
		&\quad  + \sum_{y \in [K]} \sum_{z \neq y} p_z w_y q_y \big[\psi(g_y(x)) + \beta \psi(-g_y(x))\big]
		\\
		&= \sum_{y\in [K]} \Big( p_y w_y q_y \psi(g_y(x)) 
		\\
		&\quad + (1-p_y) w_y q_y \big[\psi(g_y(x)) + \beta \psi(-g_y(x))\big] \Big),
		\\
	\end{align*}
	where $p_y := \mathrm{P}(Y=y\,|\,x)$.
	
	Due to the symmetric property of $\psi(\cdot)$, we have 
	\begin{align*}
		\mathcal{C}_{\mathcal{L}_{\psi}}(g) 
		&= \sum_{y\in [K]} \Big( p_y w_y q_y \psi(g_y(x)) 
		\\
		&\quad + (1-p_y) w_y q_y \big[\beta + (1-\beta)\psi(g_y(x))\big] \Big)
		\\
		&= \sum_{y \in [K]} w_yq_y(\beta p_y-(\beta-1))\psi(g_y)+C_1,
	\end{align*}
	where $C_1:=\sum_{y \in [K]} \beta(1-p_y)w_y q_y$.
	
	Next, we consider the constraint comparison method (CCM) \cite{lee2004multicategory} defined by
	\begin{align*}
		\mathcal{L}_{CCM}(y,g(x)) := \sum_{k \neq y} \psi(-g_k(x))
	\end{align*}
	with the constraint $\sum_{k\in[K]} g_k = 0$.
	The inner risk induced by the constraint comparison method (CCM) has the form 
	\begin{align*}
		\mathcal{C}_{CCM}(g) &:= \mathbb{E}_{Y|X} \mathcal{L}_{CCM}(X, Y)
		\\
		&= \sum_{y\in[K]}  p_y \sum_{z \neq y} \psi(-g_z)
		\\
		&= \sum_{y\in[K]} \sum_{z \neq y} p_y \psi(-g_z)
		\\
		&= \sum_{z\in[K]} \sum_{y \neq z} p_y \psi(-g_z)
		\\
		&= \sum_{y\in[K]} \sum_{z \neq y} p_z \psi(-g_y)
		\\
		&= \sum_{y\in[K]} (1-p_y) \psi(-g_y).
	\end{align*}
	Since $\psi(\cdot)$ is symmetric, we have
	\begin{align*}
		\mathcal{C}_{CCM}(g)
		&= \sum_{y\in[K]}  (1-p_y) (1-\psi(g_y))
		\\
		&= \sum_{y\in[K]} (p_y-1)\psi(g_y)+C_2,
	\end{align*}
	where $C_2 := 1-p_y$. 
	
	Denote $y^* := \max_{y\in[K]} p_y$. We have $p_{y^*} = 1$.
	By Section \ref{sec::main_alg}, we have $\argmax_{y\in[K]} w_y = \argmax_{y\in[K]} p_y$. 
	Then when $\beta >0$, there holds 
	\begin{align*}
		\argmax_{y\in[K]}w_yq_y(\beta p_y-(\beta-1)) = \argmax_{y\in[K]}(p_y-1).
	\end{align*} 
	which implies optimizing $\mathcal{L}_{LW}$ and $\mathcal{L}_{CCM}$ achieves the same classifier.
	According to Example 3 in Section 5.3 of \cite{tewari2007consistency}, when $\psi$ is differentiable, ${\mathcal{L}}_{CCM}$ is proved to be consistent in the multi-class classification setting. Therefore, optimizing \eqref{eq::lv2020progressive_loss} will also lead to the Bayes classifier, which implies when there holds
	\begin{align*}
		\mathcal{R}(\mathcal{L}_{\psi},\hat{g}_n) \to \mathcal{R}^*_{\mathcal{L}_{\psi}}
	\end{align*}
	there also holds
	\begin{align}\label{eq::bayes}
		\mathcal{R}(\mathcal{L}_{0\text{-}1},\hat{g}_n) \to \mathcal{R}^*,
	\end{align}
	where $\mathcal{L}_{0\text{-}1}$ is the multi-class supervised loss. 
	This finishes the proof. 
\end{proof}

%
%

\section{Supplementary for Experiments}  \label{sec::Aexp}

\subsection{Descriptions of Datasets} \label{sec::Adata}
\subsubsection{Benchmark Datasets}\label{sec::Abenchmark}
In Section \ref{sec::simu}, we use four widely-used benchmark datasets, i.e. MNIST\citep{lecun1998gradient}, Kuzushiji-MNIST \citep{clanuwat2018deep}, Fashion-MNIST\citep{xiao2017fashion}, CIFAR-10\citep{krizhevsky2009learning}. 
The characteristics of these datasets are reported in Table \ref{tab::FourBenchmark}. 
We concisely describe these nine datasets as follows.
\begin{itemize}
	\item MNIST: It is a 10-class dataset of handwritten digits, i.e. 0 to 9. Each data is a $28\times28$ grayscale image.
	\item Fashion-MNIST: It is also a 10-class dataset. Each instance is a fashion item from one of the 10 classes, which are T-shirt/top, trouser, pullover, dress, sandal, coat, shirt, sneaker, bag, and ankle boot. Moreover, each image is a $28\times28$ grayscale image.
	\item Kuzushiji-MNIST: Each instance is a $28\times28$ grayscale image associated with one label of 10-class cursive Japanese (“Kuzushiji”) characters. 
	\item CIFAR-10: Each instance is a $32\times32\times3$ colored image in RGB format. It is a ten-class dataset of objects including airplane, bird, automobile, cat, deer, frog, dog, horse, ship, and truck.
\end{itemize}

\begin{table*}[htbp] 
	\setlength{\tabcolsep}{10pt}
	\centering
	\caption{Summary of benchmark datasets.}
	\label{tab::FourBenchmark}
	\begin{tabular}{p{3cm}p{2.0cm}ccccc}
		\toprule
		Dataset & $\#$ Train & $\#$ Test & $\#$ Feature & $\#$ Class  
		\\
		\midrule
		MNIST & $60,000$ & $10,000$ & $784$ & $10$
		\\
		\hline
		Kuzushiji-MNIST   & $60,000$ & $10,000$ & $784$ & $10$
		\\
		\hline
		Fashion-MNIST  & $60,000$ & $10,000$ & $784$ & $10$
		\\
		\hline
		CIFAR-10 & $50,000$ & $10,000$ & $3,072$ & $10$
		\\
		\bottomrule 
	\end{tabular}
\end{table*}

\subsubsection{Real Datasets} \label{sec::Areal}
In Section \ref{sec::real}, we use five real-world partially labeled datasets (Lost, BirdSong, MSRCv2, Soccer Player, Yahoo!$\ $News). 
Detailed descriptions are shown as follows.

\begin{itemize}
	\item Lost, Soccer Player and Yahoo!$\ $News: They corp faces in images or video frames as instances, and the names appearing on the corresponding captions or subtitles are considered as candidate labels.
	\item MSRCv2: Each image segment is treated as a sample, and objects appearing in the same image are regarded as candidate labels.
	\item BirdSong: Birds' singing syllables are regarded as instances and bird species who are jointly singing during any ten seconds are represented as candidate labels.
\end{itemize}

Tabel \ref{tab::RealData}  includes the average number of candidate labels (Avg. $\#$ CLs) per instance. 

\begin{table*}[htbp] 
	\setlength{\tabcolsep}{9pt}
	\centering
	\caption{Summary of real-world partial label datasets.}
	\label{tab::RealData}
	\begin{tabular}{lccccccc}
		\toprule
		Dataset & $\#$ Examples & $\#$ Features & $\#$ Class & Avg $\#$ CLs  & Task Domain
		\\
		\midrule
		Lost & $1,122$ & $108$ & $16$ & $2.23$  &  Automatic face naming \\
		\hline
		BirdSong  & $4,998$ & $38$ & $13$ & $2.18$ & Bird song classification  
		\\
		\hline
		MSRCv2 & $1,758$ & $48$ & $23$ & $3.16$ & Object classification 
		\\
		\hline
		Soccer Player & $17,472$ & $279$  & $171$  & $2.09$ & Automatic face naming
		\\
		\hline
		Yahoo! News & $22,991$ & $163$ & $219$ & $1.91$ & Automatic face naming 
		\\
		\bottomrule 
	\end{tabular}
\end{table*}

\subsection{Compared Methods}\label{sec::Amethod}
The compared partial label methods are listed as follows.

IPAL \citep{zhang2015solving} : It is a non-parametric method that uses the label propagation strategy to iteratively update the confidence of each candidate label. The suggested configuration is as follows: the balancing coefficient $\alpha = 0.95$, the number of nearest neighbors considered $k = 10$, and the number of iterations $T = 100$. 

PALOC (\citep{wu2018towards}): It adapts the popular one-vs-one decomposition strategy to solve the partial label problem. The suggested configuration is the balancing coefficient $\mu = 10$ and the SVM model. 

PLECOC (\citep{zhang2017disambiguation}): It transforms the partial label learning problem to a binary label problem by E-COC coding matrix. The suggested configuration is codeword length $L = \lceil10 \log2(q)\rceil$ and SVM model. Moreover, the eligibility parameter $\tau$ is set to be one-tenth of the number of training instances (i.e. $\tau = |D|/10$). 

Hyper-parameters for these three methods are selected
through a 5-fold cross-validation.

Next, we list three compared partial label methods based on neural network models. 

PRODEN(\citep{lv2020progressive}): It propose a novel estimator of the classification risk and  a progressive identification algorithm for approximately minimizing the proposed risk estimator. 
The parameters is selected through grid search, where the learning rate $lr \in \{10^{-5}, 10^{-4}, \ldots, 10^{-1}\}$ and weight decay $wd \in \{10^{-6}, 10^{-4}, \ldots, 10^{-2}\}$. The optimizer is stochastic gradient descent (SGD) with momentum 0.9. 

RC $\&$ CC(\citep{feng2020provably}): The former method is a novel risk-consistent partial label learning method and the latter one is classifier-consistent based on the generation model. 
For the two methods, the suggested parameter grids of learning rate and weight decay are both $\{10^{-6}, 10^{-5}, \ldots, 10^{-1}\}$. They are implemented by PyTorch and the Adam optimizer. 

For all these three compared methods, hyper-parameters are selected so as to maximize the accuracy on a validation set, constructed by randomly sampling $10\%$ of the training set. The mini-batch size is set as $256$ and the number of epochs is set as $250$. They all apply the cross-entropy loss function to build the partial label loss function.

\subsection{Details of Architecture}
In this section, we list the architecture of three models, linear, MLP, and ConvNet. 
The linear model is a linear-in-input model: $d-10$.
MLP refers to a $5$-layer fully connected networks with ReLU as the activation function, whose architecture is  $d-300-300-300-300-10$. Batch normalization was applied before hidden layers. For both models, the softmax function was applied to the output layer, and $\ell_2$-regularization was added. 

The detailed architecture of ConvNet \citep{laine2016temporal} is as follows.

0th (input) layer: (32*32*3)-

1st to 4th layers: [C(3*3, 128)]*3-Max Pooling-

5th to 8th layers: [C(3*3, 256)]*3-Max Pooling-

9th to 11th layers: C(3*3, 512)-C(3*3, 256)-C(3*3, 128)-

12th layers: Average Pooling-10

where C(3*3, 128) means 128 channels of 3*3 convolutions followed by Leaky-ReLU (LReLU) active function, $[\cdot]*3$ means 3 such layers, etc.

%

\subsection{Matrix Representations of Alternative Data Generations}\label{sec::Amatrix}

\textbf{Case 1:}
Each true label has a unique similar label with probability $q_1>0$ to enter the partial label set, while all other labels are not partial labels. When $q_1 = 0.5$, the data generation corresponds to the one proposed in \cite{lv2020progressive}.
A matrix representation is 
\begin{align*}
	\begin{bmatrix}
		1 & q_1 & 0 & 0 & 0 & 0 & 0 & 0 & 0 & 0 \\
		0 & 1 & q_1 & 0 & 0 & 0 & 0 & 0 & 0 & 0 \\
		0 & 0 & 1 & q_1 & 0 & 0 & 0 & 0 & 0 & 0 \\
		0 & 0 & 0 & 1 & q_1 & 0 & 0 & 0 & 0 & 0 \\
		0 & 0 & 0 & 0 & 1 & q_1 & 0 & 0 & 0 & 0 \\
		0 & 0 & 0 & 0 & 0 & 1 & q_1 & 0 & 0 & 0 \\
		0 & 0 & 0 & 0 & 0 & 0 & 1 & q_1 & 0 & 0 \\
		0 & 0 & 0 & 0 & 0 & 0 & 0 & 1 & q_1 & 0 \\
		0 & 0 & 0 & 0 & 0 & 0 & 0 & 0 & 1 & q_1 \\
		q_1 & 0 & 0 & 0 & 0 & 0 & 0 & 0 & 0 & 1 \\
	\end{bmatrix}
\end{align*}
where the element in the $i$-th row and the $j$-th column represents the conditional probability $\mathrm{P}(j \in \vec{Y} \,|\, Y=i, x)$.

\textbf{Case 2:} 
Each true label has two similar labels with probability $q_1>0$ to be partial labels, while all other labels are not partial labels. Here we let $q_1=0.3$.
A matrix representation is 
\begin{align*}
	\begin{bmatrix}
		1 & q_1 & 0 & 0 & 0 & 0 & 0 & 0 & 0 & q_1 \\
		q_1 & 1 & q_1 & 0 & 0 & 0 & 0 & 0 & 0 & 0 \\
		0 & q_1 & 1 & q_1 & 0 & 0 & 0 & 0 & 0 & 0 \\
		0 & 0 & q_1 & 1 & q_1 & 0 & 0 & 0 & 0 & 0 \\
		0 & 0 & 0 & q_1 & 1 & q_1 & 0 & 0 & 0 & 0 \\
		0 & 0 & 0 & 0 & q_1 & 1 & q_1 & 0 & 0 & 0 \\
		0 & 0 & 0 & 0 & 0 & q_1 & 1 & q_1 & 0 & 0 \\
		0 & 0 & 0 & 0 & 0 & 0 & q_1 & 1 & q_1 & 0 \\
		0 & 0 & 0 & 0 & 0 & 0 & 0 & q_1 & 1 & q_1 \\
		q_1 & 0 & 0 & 0 & 0 & 0 & 0 & 0 & q_1 & 1 \\
	\end{bmatrix}
\end{align*}



\textbf{Case 3:} 
In this case, we allow more pairs of similar labels.
For each true label, there exist a pair of most similar labels with probability $q_1$ to be partial labels, two pairs of less similar labels with probabilities $q_2$ and $q_3$ respectively. 
Assume that $q_1 > q_2 > q_3 > 0$.
Other labels are taken as non-partial labels. 
We let $q_1 = 0.5$, $q_2 = 0.3$, $q_3 = 0.1$. A matrix representation is 
\begin{align*}
	\begin{bmatrix}
		1 & q_1 & q_2 & q_3 & 0 & 0 & 0 & q_3 & q_2 & q_1 \\
		q_1 & 1 & q_1 & q_2 & q_3 & 0 & 0 & 0 & q_3 & q_2 \\
		q_2 & q_1 & 1 & q_1 & q_2 & q_3 & 0 & 0 & 0 & q_3 \\
		q_3 & q_2 & q_1 & 1 & q_1 & q_2 & q_3 & 0 & 0 & 0 \\
		0 & q_3 & q_2 & q_1 & 1 & q_1 & q_2 & q_3 & 0 & 0 \\
		0 & 0 & q_3 & q_2 & q_1 & 1 & q_1 & q_2 & q_3 & 0 \\
		0 & 0 & 0 & q_3 & q_2 & q_1 & 1 & q_1 & q_2 & q_3 \\
		q_3 & 0 & 0 & 0 & q_3 & q_2 & q_1 & 1 & q_1 & q_2 \\
		q_2 & q_3 & 0 & 0 & 0 & q_3 & q_2 & q_1 & 1 & q_1 \\
		q_1 & q_2 & q_3 & 0 & 0 & 0 & q_3 & q_2 & q_1 & 1 \\
	\end{bmatrix}
\end{align*}


\end{document}